\DeclareMathOperator{\Var}{Var}
\newcommand{\argmin}{\operatornamewithlimits{\arg\min}}
\DeclarePairedDelimiterX{\infdivx}[2]{(}{)}{%
  #1\;\delimsize\|\;#2%
}
\newcommand{\infdiv}{D_\mathit{KL}\infdivx}
\newcommand{\BlackBox}{\rule{1.5ex}{1.5ex}}  %
\newenvironment{proof}{\par\noindent{\bf Proof\ }}{\hfill\BlackBox}
\newtheorem{theorem}{Theorem}
\newtheorem{lemma}[theorem]{Lemma} 
\newtheorem{proposition}[theorem]{Proposition}
\newtheorem{assumption}{Assumption}
\begin{document}

\title{Nearest Neighbour Based Estimates of Gradients:\\ Sharp Nonasymptotic Bounds and Applications}

\author{Guillaume Ausset \\
        LTCI, T\'el\'ecom Paris, Institut Polytechnique de Paris\\
        BNP Paribas\\
        \texttt{guillaume.ausset@telecom-paris.fr} \\
        \And
        Stephan Cl\'emen\c{c}on \\
        LTCI, T\'el\'ecom Paris, Institut Polytechnique de Paris\\     
        \texttt{stephan.clemencon@telecom-paris.fr} \\
        \And
        Fran\c{c}ois Portier \\
        LTCI, T\'el\'ecom Paris, Institut Polytechnique de Paris \\
        \texttt{francois.portier@telecom-paris.fr}}

\maketitle

\begin{abstract} Motivated by a wide variety of applications, ranging from stochastic optimization to dimension reduction through variable selection, the problem of estimating gradients accurately is of crucial importance in statistics and learning theory. We consider here the classic regression setup, where a real valued square integrable r.v. $Y$ is to be predicted upon observing a (possibly high dimensional) random vector $X$ by means of a predictive function $f(X)$ as accurately as possible in the mean-squared sense and study a nearest-neighbour-based pointwise estimate of the gradient of the optimal predictive function, the regression function $m(x)=\mathbb{E}[Y\mid X=x]$. Under classic smoothness conditions combined with the assumption that the tails of $Y-m(X)$ are sub-Gaussian, we prove nonasymptotic bounds improving upon those obtained for alternative estimation methods. Beyond the novel theoretical results established, several illustrative numerical experiments have been carried out. The latter provide strong empirical evidence that the estimation method proposed works very well for various statistical problems involving gradient estimation, namely dimensionality reduction, stochastic gradient descent optimization and quantifying disentanglement. 
\end{abstract}

\section{Introduction}

In this paper, we place ourselves in the usual regression setup, one of the flagship predictive problems in statistical learning. Here and throughout, $(X,Y)$ is a pair of random variables defined on the same probability space $(\Omega,\; \mathcal{F},\; \mathbb{P})$ with unknown probability distribution $P$: the r.v. $Y$ is real valued and square integrable, whereas the (supposedly continuous) random vector $X$ takes its values in $\mathbb{R}^D$, with $D\geq 1$, and models some information \textit{a priori} useful to predict $Y$. Based on a sample $\mathcal{D}_n=\{ (X_1,Y_1),\; \ldots,\; (X_n,Y_n) \}$ of $n\geq 1$ independent copies of the generic pair $(X,Y)$, the goal pursued is to build a Borelian mapping $f:\mathbb{R}^d\rightarrow \mathbb{R}$ that produces, in average, a good prediction $f(X)$ of $Y$. Measuring classically its accuracy by the squared error, the learning task then boils down to finding a predictive function $f$ that is solution of the risk minimization problem $\min_{f}\mathcal{R}_P(f)$, where 
\begin{equation}\label{eq:true_risk}
\mathcal{R}_P(f)=\mathbb{E}\left[\left(Y-f(X)\right)^2\right].
\end{equation}
Of course, the minimum is attained by the regression function $m(X)=\mathbb{E}[Y\mid X]$, which is unknown, just like $Y$'s conditional distribution given $X$ and the risk \eqref{eq:true_risk}. The empirical risk minimization (ERM) strategy consists in solving the optimization problem above, except that the unknown distribution $P$ is replaced by an empirical estimate based on the training data $\mathcal{}D_n$, such as the raw empirical distribution $\hat{P}_n=(1/n)\sum_{i\leq n}\delta_{X_i}$ typically, denoting by $\delta_x$ the Dirac mass at any point $x$, and minimization is restricted to a class $\mathcal{F}$ supposed to be rich enough to include a reasonable approximant of $m$ but not too complex (\textit{e.g.} of finite {\sc VC} dimension) in order to control the fluctuations of the deviations between the empirical and true distributions uniformly over it. Under the assumption that the random variables $Y$ and $f(X)$, $f\in \mathcal{F}$, have sub-Gaussian tails, the analysis of the performance of empirical risk minimizers (\textit{i.e.} predictive functions obtained by least-squares regression) has been the subject of much interest in the literature, see \textit{e.g.} \cite{gyorfiDistributionFreeTheoryNonparametric2002}, \cite{massartConcentrationInequalitiesModel2007}, \cite{boucheronConcentrationInequalitiesNonasymptotic2013} or \cite{lecueLearningSubgaussianClasses2016}  (and refer to \textit{e.g.} \cite{lugosiRiskMinimizationMedianofmeans2016} for alternatives to the ERM approach in non sub-Gaussian situations).

In this paper, we are interested in estimating accurately the (supposedly well-defined) gradient $\nabla m(x)$ by means of the popular $k$ nearest neighbour ($k$-NN) approach, see \textit{e.g.} Chapter in \cite{devroyeProbabilisticTheoryPattern1996a} or \cite{biauLecturesNearestNeighbor2015}. The \textit{gradient learning} issue has received increasing attention in the context of local learning problems such as classification or regression these last few years, see \textit{e.g.} \cite{mukherjeeEstimationGradientsCoordinate2006,mukherjeeLearningCoordinateCovariances2006}. Because it provides a valuable information about the local structure of a dataset in a high-dimensional space, an accurate estimator of the gradient of a predictive function can be used for various purposes such as dimensionality reduction or variable selection (see \textit{e.g.} \cite{hristacheStructureAdaptiveApproach2001, hristacheDirectEstimationIndex1998,xiaAdaptiveEstimationDimension2002,xiaConstructiveApproachEstimation2007,dalalyanNewAlgorithmEstimating2008,yeLearningSparseGradients2012}), the partial derivative w.r.t. a given variable being a natural indicator of its importance regarding prediction. The previous references are all concerned with outer-products of gradients so as to recover some dimension-reduction subspace.
Estimators of the gradients have also been proposed for zeroth-order optimization (see \textit{e.g.} \cite{wangStochasticZerothorderOptimization2018}) and can benefit from good convergence properties.

Whereas the use of standard nonparametric methods for gradient estimation is documented in the literature (see \citep{fanLocalPolynomialModelling1996,delecroixNonparametricEstimationRegression1996,debrabanterDerivativeEstimationLocal2013} for the use of local-polynomial with kernel smoothing techniques, \citep{gasserEstimatingRegressionFunctions1984} for the so-called Gasser-Muller alternative and \citep{zhouDerivativeEstimationSpline2000} for the use of regression spline), it is the purpose of the present article to investigate the performance of an alternative local averaging method, the popular $k$-NN method. As it provides piecewise constant estimates, it is easier to conceptualize for the practitioner and, more importantly; the neighbourhoods determined by the parameter $k$ are data-driven and often more consistent than those defined by the bandwidth in the kernel setting, especially in high dimensions.

Here we investigate the behaviour of the estimator of the (supposedly sparse) gradient of the regression function at a given point $x\in \mathbb{R}^D$, obtained by solving a regularized local linear version of the $k$-NN problem with a Lasso penalty. Precisely, nonasymptotic bounds for the related estimation error are established. Whereas $k$-NN estimators of the regression function have been extensively analysed from a nonasymptotic perspective (see \textit{e.g.} \cite{jiangNonAsymptoticUniformRates2019} and the references therein), the result stated in this paper is the first of this type to the best of our knowledge. 

The relevance of the approach promoted is then illustrated by several applications. A variable selection algorithm that exploits the local nature of the gradient estimator proposed is first exploited to refine the popular random forest algorithm (see \cite{breimanRandomForests2001a}): by exploiting the node estimate of the gradient we are able to better direct the choice of cuts. Very simple to implement and accurate, as supported by the various numerical experiments carried out, it offers an attractive and flexible alternative to existing traditional methods such as PCA or the more closely related method of \cite{dalalyanNewAlgorithmEstimating2008}, allowing for a local reduction of the dimension rather than implementing a global preprocessing of the data.
We next show how a rough statistical estimate of the gradient of any smooth objective function based on the estimation principle previously analysed in the context of regression can be exploited in a basic gradient descent algorithm, we exploit the local structure of the algorithm to be able to reuse past computations in order to calculate our estimator and jump to a better local minimum at each gradient step as well.
Finally, we give an example of the usefulness of a sparse gradient estimate when one believes the gradient to be truly sparse: we use our estimator to retrieve the direction of interest for a specific attribute inside a disentangled representation and show how this can be used as an \textit{ad hoc} measure of disentanglement.

The article is organized as follows. In section \ref{sec:background}, the estimation method and the assumptions involved in the subsequent analysis are listed. The main theoretical results of the paper are stated in section \ref{sec:main}, while several applications of the estimation method promoted are described at length and illustrated by numerical experiments in section \ref{sec:exp}. Some concluding remarks are collected in section \ref{sec:conclusion} and technical proofs, as well as additional numerical results, are postponed to the Supplementary Material.

\section{Background - The Estimation Framework}\label{sec:background}
We place ourselves in the nonparametric regression setup described in the previous section. Here and throughout, the indicator function of any event $\mathcal{E}$ is denoted by $\mathds{1}_{\mathcal{E}}$, the cardinality of any finite set $E$ by $\# E$. By $\vert\vert x\vert\vert=\max\{\vert x_1\vert,\; \ldots\; \vert x_D\vert\}$,  $\vert\vert x\vert\vert_1=\vert x_1\vert+\ldots+\vert x_D\vert$ and $\vert\vert x\vert\vert_2=\sqrt{ x_1^2+\ldots+\ x_D^2}$ are meant the $\ell_{\infty}$-norm, the $\ell_1$-norm and the $\ell_2$-norm of any vector $x=(x_1,\; \ldots,\; x_D)$ in $\mathbb{R}^D$.  Any vector $x$ in $\mathbb{R}^D$ is identified as a column vector, the transpose of any matrix $M$ is denoted by $M^\intercal$ and $\mathcal{B}(x, \tau)=\{z\in \mathbb{R}^D:\; \vert\vert x-z\vert\vert \leq \tau \}$ is the (closed) ball of center $x\in \mathbb{R}^D$ and radius $\tau>0$.

\noindent{\bf $k$-NN estimation methods in regression.} Let $x\in \mathbb{R}^D$ be fixed and $k\in\{1,\; \ldots,\; n\}$. Define 
\begin{align*}
    \hat \tau_{k} (x) {=} \inf \{\tau\geq 0 \,:\, \sum_{i=1} ^ n  \mathds{1}_{\{X_i \in \mathcal{B}(x,\tau)\}}  \geq  k \},
\end{align*}
which quantity is referred to as the $k$-NN radius. Indeed, observe that, equipped with this notation, $\mathcal{B}(x, \hat \tau_{k} (x))$ is the smallest ball with center $x$ containing $k$ points of the sample $\mathcal{D}_n$ and the mapping $\alpha\in (0,1] \mapsto \hat \tau_{\alpha n} (x)  $ is the empirical quantile function related to the sample $\{\|x-X_1\|,\; \ldots,\; \|x-X_n\|\}$. The rationale behind $k$-NN estimation in the regression context is simplistic, the method consisting in approximating $m(x)=\mathbb{E}[Y\mid X=x]$ by $\mathbb{E}[Y\mid X\in \mathcal{B}(x, \tau)]$, the mapping $m$ being assumed to be smooth at $x$, and computing next the empirical version of the approximant (\textit{i.e.} replacing the unknown distribution $P$ by the raw empirical distribution). This yields the estimator
\begin{equation}\label{eq:rawNN}
 \hat m_{k} (x)  = \frac 1 k  \sum_{i : X_i \in \mathcal{B}(x, \hat{\tau}_k(x))}  Y_i     ,
\end{equation} 
usually referred to as the standard $k$-nearest neighbour predictor at $x$. Of course, the mapping $x\in \mathbb{R}^D\mapsto \hat m_{k} (x) $ is locally/piecewise constant, just like $x\in \mathbb{R}^D\mapsto \hat{\tau}_{k} (x) $.
The local average $ \hat m_{k} (x) $ can also be naturally expressed as
\begin{equation}
  \hat m_{k}  (x)=   \argmin_{ m  \in \mathbb{R}} \sum_{i : X_i \in \mathcal{B}(x, \hat{\tau}_k(x))} (Y_i - m )^2 .\label{loco}
\end{equation}
For this reason, the estimator \eqref{eq:rawNN} is sometimes referred to as the \textit{local constant} estimator in the statistical literature. Following in the footsteps of the approach proposed in \cite{fanDesignadaptiveNonparametricRegression1992}, the estimation of the regression function at $x$ can be refined by approximating the supposedly smooth function $m(z)$ around $x$ in a linear fashion, rather than by a local constant $m$, since we have $m(z) = m(x) + \nabla m(x)^\intercal (z-x) + o(\lVert z-x \rVert)$ by virtue of a first order Taylor expansion. For any point $X_i $ close to $x$, one may write $m(X_i) \simeq  m + \beta^\intercal (X_i-x)$ and
the \textit{local linear} estimator of $m(x)$ and the related estimator of the gradient $\beta(x)=\nabla m(x)$ are then defined as
\begin{align}
     \argmin_{ (m, \beta) \in \mathbb{R}^{D + 1}} \sum_{i : X_i \in \mathcal{B}(x, \hat{\tau}_k(x))} (Y_i - m - \beta^\intercal (X_i - x))^2 .\label{ll}
\end{align}
Because of its reduced bias, the local linear estimator (the first argument of the solution of the optimization problem above) can improve upon the local constant estimator \eqref{eq:rawNN} in moderate dimensions. However, when the dimension $D$ increases, its variance becomes large and the design matrix of the regression problem is likely to have small eigenvalues, causing numerical difficulties.  For this reason, we introduce here a lasso-type regularized version of~\eqref{ll}, namely
\begin{align}
    (\tilde m_{k}   (x) , \tilde \beta_k (x)) \in \argmin_{(m, \beta)\in \mathbb{R}^{D+ 1}} \sum_{i : X_i \in \mathcal{B}(x, \hat{\tau}_k(x))} (Y_i - m - \beta^\intercal (X_i - x))^2 + \lambda \lVert \beta \rVert_1 ,\label{lll}
\end{align}
where $\lambda>0$ is a tuning parameter governing the amount of $\ell_1$-complexity penalization.  For the moment, we let it be a free parameter and will propose a specific choice in the next section. Focus is here on the gradient estimator $\tilde \beta_k (x)$, \textit{i.e.} the second argument in \eqref{lll}. In the subsequent analysis, nonasymptotic bounds are established for specific choices of $\lambda$ and $k$. The following technical assumptions are required.

\noindent {\bf Technical assumptions.} The hypothesis formulated below permits us to relate the volumes of the balls $\mathcal{B}(x,\; \tau)$ to their probability masses, for $\tau$ small enough. 

\begin{assumption}\label{cond:density}
There exists $\tau_0>0$ such that restriction of $X$'s distribution on $B(x, \tau_0)$ has a bounded density $f_X$, bounded away from zero, with respect to Lebesgue measure:
\begin{equation*}
b_f = \inf _{y\in B(x, \tau_0)}  f_X (y)>0 \text{ and }U_f = \sup _{y\in B(x, \tau_0)}  f_X (y)< +\infty.
\end{equation*}
Suppose in addition that $U_f/ b_f\leq 2$.
\end{assumption}
The constant $2$ involved in the condition above for notational simplicity can be naturally replaced by any constant $1+\gamma$, with $\gamma>0$. The next assumption, useful to control the variance term, is classic in regression, it stipulates that we have $Y=m(X)+\varepsilon$, with a sub-Gaussian residual $\varepsilon$ independent from $X$.
\begin{assumption}\label{cond:sub_gaussian_inovation}
The zero-mean and square integrable r.v. $\varepsilon = Y-m(X)$ is independent from $X$ and is sub-Gaussian with parameter $\sigma^2>0$, \textit{i.e.} $\forall \lambda\in \mathbb R$, $\mathbb{E} [\exp ( \lambda \varepsilon ) ] \leq \exp( - \sigma^2 \lambda^2/2) $.  
\end{assumption}

In order to control the bias error when estimating the gradient $\beta(z)=\nabla m(z)$ of the regression function at $x$, smoothness conditions are naturally required.

\begin{assumption}\label{cond:lip2}
The function $m(z)$ is differentiable on $\mathcal{B}(x, \tau_0)$ with gradient $\beta(z)=\nabla m(z)$ and there exists $L_2>0$ such that for all $z \in \mathcal{B}(x, \tau_0)$,
\begin{align*}
|m (z) - m (x) - \beta(x)  (z-x) | \leq  L_2\|z-x\|^2 .
\end{align*}
\end{assumption}

Finally, a Lipschitz regularity condition is required for the density $f_X$.

\begin{assumption}\label{cond:lip3}
The function $f_X$ is $L$-Lipschitz at $x$ on $\mathcal{B}(x, \tau_0)$, \textit{i.e.} there exists $L>0$ such that for all $z \in B(x, \tau_0)$,
\begin{align*}
|f_X (z) - f_X(x) | \leq  L\|z-x\| .
\end{align*}
\end{assumption}

We point out that, as the goal of this paper is to give the main ideas underlying the use of the $k$-NN methodology for gradient estimation rather than carrying out a fully general analysis,  the $\ell_\infty$-norm is considered here, making the study of $\ell_1$ regularization easier. The results of this paper can be extended to other norms at the price of additional work.

\section{Main result - Rate Bounds for the $k$-NN based Gradient Estimator}\label{sec:main}

The main theoretical result of the present paper is now stated and further discussed. Under the hypotheses listed in the previous section and for specific choices of $k$ and $\lambda$, it provides a nonasymptotic bound for the estimator $\tilde{\beta}_k(x)$ of the gradient $\beta(x)=\nabla m(x)$ at $x$ given by \eqref{lll}. Whereas nonasymptotic bounds for $k$-NN estimators of the regression function have been established under various smoothness assumptions (see \textit{e.g.} \cite{jiangNonAsymptoticUniformRates2019} or \cite{kpotufeKNNRegressionAdapts2011}), no nonasymptotic study of $k$-NN based estimator of the gradient of the regression function is documented in the literature. To the best of our knowledge, the result proved in this article is the first of this nature. Two key quantities are involved in the upper confidence bound given in Theorem \ref{th:gradient_ell2}, the (deterministic) radius 
\begin{align*}
\overline{\tau} _ k =  \left (\frac{ 2  k }{ n b_f2^D}  \right)^{ 1/ D} ,
\end{align*}
that upper bounds the $k$-NN radius on an event holding true with large probability, as well as the cardinality of the so called local active set
\begin{align*}
\mathcal S_x = \{  1\leq k\leq D  \, : \, \beta_ k (x) \neq 0\} .
\end{align*}

\begin{theorem}\label{th:gradient_ell2}
Suppose that assumptions \ref{cond:density}, \ref{cond:sub_gaussian_inovation}, \ref{cond:lip2} and \ref{cond:lip3} are fulfilled. Let $n\geq 1$ and $k\geq 1$ such that $\overline{\tau} _ k\leq \tau_0$.  Let $\delta\in (0,1)$ and set  $\lambda =  \overline{\tau} _ k  ( \sqrt{ 2   \sigma^2   \log(8D/\delta)/k } + L_2 \overline{\tau} _ k^2 )$. Then, we have with probability larger than $1-\delta$,
\begin{equation}\label{eq:main_bound}
\|  \tilde{\beta}_k  (x) - \beta(x)  \|_2\leq    (24)^2  \sqrt{\#\mathcal S_x }    \left(   \overline \tau_k ^{-1} \sqrt{\frac{ 2   \sigma^2   \log(16D/\delta)}{k} } + L_2 \overline \tau_k  \right),
\end{equation}
as soon as $C_1  ( D\log(  2 n  )  +   \#\mathcal S_x \log( 2 D n / \delta))   \leq k  \leq  C_2  n $,   $  \overline\tau_k   ^{2}     \leq  (   b_f^2 /( C_3 \#\mathcal S_x L ^2 )  \wedge \tau_0 ^2 )$, where $C_1$, $C_2$ and $C_3$ are universal constants.
\end{theorem}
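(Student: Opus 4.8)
The plan is to recast $\tilde\beta_k(x)$ as the solution of a standard Lasso problem on the neighbourhood of $x$ and then to run the classical restricted-eigenvalue argument, the two nonstandard ingredients being the control of the random $k$-NN radius and the conditioning of the \emph{local} Gram matrix. First I would profile out the unpenalized intercept in \eqref{lll}: minimizing over $m$ amounts to centering, within $S=\{i:X_i\in\mathcal{B}(x,\hat\tau_k(x))\}$, both the responses and the covariates by their local averages. Writing $W_i=(X_i-x)-\bar Z$ with $\bar Z=\frac1k\sum_{i\in S}(X_i-x)$ and invoking the first-order expansion of Assumption \ref{cond:lip2}, the centered response decomposes as $\beta(x)^\intercal W_i+(\varepsilon_i-\bar\varepsilon)+(r_i-\bar r)$ with remainder $|r_i|\le L_2\hat\tau_k(x)^2$. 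Thus $\tilde\beta_k(x)$ solves a Lasso with design rows $W_i^\intercal$, effective noise $\xi_i=(\varepsilon_i-\bar\varepsilon)+(r_i-\bar r)$ and true parameter $\beta(x)$, which is supported on $\mathcal{S}_x$.

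Comparing the objective at $\tilde\beta_k(x)$ with that at $\beta(x)$ gives the basic inequality $\Delta^\intercal\hat\Sigma\Delta\le\langle\tfrac2k\sum_{i\in S}W_i\xi_i,\,\Delta\rangle+\tfrac{\lambda}{k}(\|\beta(x)\|_1-\|\tilde\beta_k(x)\|_1)$, where $\Delta=\tilde\beta_k(x)-\beta(x)$ and $\hat\Sigma=\frac1k\sum_{i\in S}W_iW_i^\intercal$. The stochastic crux is to show that, with probability at least $1-\delta/2$, the normalized gradient obeys $\|\tfrac2k\sum_{i\in S}W_i\xi_i\|_\infty\le\lambda$ for exactly the stated $\lambda$: conditionally on the design the noise part $\tfrac2k\sum_{i\in S}W_{i,j}\varepsilon_i$ is sub-Gaussian (Assumption \ref{cond:sub_gaussian_inovation}) with proxy $\tfrac{4\sigma^2}{k^2}\sum_{i\in S}W_{i,j}^2\lesssim\overline\tau_k^2\sigma^2/k$ on the radius event, which after a union bound over the $D$ coordinates yields the $\overline\tau_k\sqrt{2\sigma^2\log(8D/\delta)/k}$ term, while the deterministic remainder contributes $L_2\overline\tau_k^3$. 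On this event the usual splitting over $\mathcal{S}_x$ together with $\Delta^\intercal\hat\Sigma\Delta\ge0$ forces the cone condition $\|\Delta_{\mathcal{S}_x^c}\|_1\le3\|\Delta_{\mathcal{S}_x}\|_1$ and leaves $\Delta^\intercal\hat\Sigma\Delta\lesssim\lambda\|\Delta_{\mathcal{S}_x}\|_1\le\lambda\sqrt{\#\mathcal{S}_x}\,\|\Delta\|_2$.

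The main obstacle is a high-probability restricted lower bound $\Delta^\intercal\hat\Sigma\Delta\ge\kappa\|\Delta\|_2^2$ over the cone at the correct scale $\kappa\asymp\overline\tau_k^2$, which I would break into three sub-steps. First, control of the radius: using Assumption \ref{cond:density} to convert the volume of $\mathcal{B}(x,\tau)$ into probability mass and a Chernoff bound on the binomial count $\sum_i\mathds{1}_{\{X_i\in\mathcal{B}(x,\tau)\}}$, one gets $\hat\tau_k(x)\le\overline\tau_k\le\tau_0$ with high probability provided $k\le C_2 n$, confining the neighbours to $\mathcal{B}(x,\overline\tau_k)$ and yielding $\|W_i\|\le2\overline\tau_k$. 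Second, conditioning of the population local second moment: for $U$ drawn from $X$'s law restricted to $\mathcal{B}(x,\tau)$ the centered moment matrix is $\asymp\tau^2 I_D$ up to a perturbation governed by the density variation, which Assumption \ref{cond:lip3} bounds by $L\tau/b_f$; the hypothesis $\overline\tau_k^2\le b_f^2/(C_3\#\mathcal{S}_x L^2)$ is precisely what keeps the smallest eigenvalue on the cone bounded below by a constant multiple of $\overline\tau_k^2$. Third, the empirical-to-population transfer: a matrix Chernoff/Bernstein inequality for $\hat\Sigma$, whose summands are bounded by $\overline\tau_k^2$ on the radius event, carries the population bound to $\hat\Sigma$, the requirement $k\ge C_1(D\log(2n)+\#\mathcal{S}_x\log(2Dn/\delta))$ supplying the sample size needed for the operator-norm deviation to be a small fraction of $\overline\tau_k^2$ with probability at least $1-\delta/2$.

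On the intersection of these good events (total probability at least $1-\delta$), combining $\kappa\|\Delta\|_2^2\le\Delta^\intercal\hat\Sigma\Delta$ with $\Delta^\intercal\hat\Sigma\Delta\le\lambda\sqrt{\#\mathcal{S}_x}\,\|\Delta\|_2$ and cancelling one power of $\|\Delta\|_2$ gives $\|\Delta\|_2\le\lambda\sqrt{\#\mathcal{S}_x}/\kappa\asymp\overline\tau_k^{-2}\sqrt{\#\mathcal{S}_x}\,\lambda$; substituting $\lambda=\overline\tau_k(\sqrt{2\sigma^2\log(8D/\delta)/k}+L_2\overline\tau_k^2)$ reproduces the two terms $\overline\tau_k^{-1}\sqrt{2\sigma^2\log(16D/\delta)/k}$ and $L_2\overline\tau_k$ of \eqref{eq:main_bound}, the universal constant $(24)^2$ absorbing the cone factor $3$, the compatibility constant and the slack in the matrix-concentration step. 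I expect the second and third sub-steps of the restricted-eigenvalue analysis to be the genuinely delicate part, since one must propagate the $\overline\tau_k^2$ scaling through both the non-uniform-density perturbation and the random design, and it is there that the peculiar conditions on $k$ and on $\overline\tau_k$ originate.
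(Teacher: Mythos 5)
Your plan follows essentially the same route as the paper: profile out the intercept by local centering, apply the Lasso basic inequality (the paper invokes Lemma 11.1 of Hastie et al.\ rather than rederiving it) to get the cone condition and the bound $\|\Delta\|_2\lesssim\sqrt{\#\mathcal S_x}\,\lambda/\hat\gamma_n$, verify $\lambda\ge(2/k)\|\mathbb X^\intercal w\|_\infty$ via conditional sub-Gaussianity on the radius event $\hat\tau_k(x)\le\overline\tau_k$, and lower-bound the restricted eigenvalue by combining a population bound $\gtrsim\overline\tau_k^2$ (using a matching lower bound $\hat\tau_k(x)\ge\underline\tau_k$ and the Lipschitz density) with entrywise concentration and the cone inequality $\|u\|_1^2\lesssim\#\mathcal S_x\|u\|_2^2$. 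The one sub-step you gloss over is the empirical centering term $\bar Z\bar Z^\intercal$: the trivial bound $|\bar Z_j|\le\hat\tau_k(x)$ makes it the same order $\overline\tau_k^2$ as the signal, so the paper needs a dedicated uniform-in-radius empirical process bound (its Proposition 10, via a VC/Talagrand argument over the class of balls, which is where the $D\log(2n)$ term in the condition on $k$ originates) to show $\max_j|\bar Z_j|$ is in fact of smaller order.
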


The analysis of the accuracy of the nearest neighbour estimate $\hat m_k(x)$ classically involves the following decomposition of the estimation error
\begin{equation}\label{decomp_bias_var}
 \hat {m}_{k} (x)   - m (x) = \left( \hat {m}_{k}(x)  -   m_{k}(x)   \right) + \left(m_{k}(x)   - m(x)\right),
\end{equation}
where
$ m_{k}(x)  = (1/k)  \sum_{i : X_i \in \mathcal{B}(x, \hat{\tau}_k(x))} m(X_i)$. The approach developed in \citep{jiangNonAsymptoticUniformRates2019} essentially consists in combining this decomposition with the fact that $ \hat{\tau}_k(x)\leq \overline \tau _k $ with large probability. By its own nature, our local linear Lasso regularized estimate of the gradient $\tilde \beta_k$ cannot be treated in the same way. First, in order to take advantage of the Lasso regularization in sparse situations  (\textit{i.e.} when the gradient at $x$ depends on a small number of covariates solely), we rely on a basic inequality \cite[Lemma 11.1]{hastieStatisticalLearningSparsity2015} which is useful when analysing standard Lasso estimates. Second, we need to control the size of the neighbourhoods $\hat{\tau}_k(x)$ on an event of large probability. In this respect, we slightly deviate from the approach of  \citep{jiangNonAsymptoticUniformRates2019}: we do not rely on concentration results over VC classes but only on the Chernoff concentration bound. This way, we can relax significantly the lower bound conditions for $k$ as the dimension $D$ increases, see Theorem 2 in the Supplementary Material (which compares favourably with Corollary 1 in \cite{jiangNonAsymptoticUniformRates2019} for instance).

Balancing between the bias and the variance term of the upper bound provided in \eqref{eq:main_bound} we obtain that the optimal value for $k$ is $k\sim n^{4/(4+D)}$. In this case, the bound stated above yields the rate $n^{-1/(4+D)}$. As a consequence, our bound matches the minimax rate (up to log terms) given in \citep{stoneOptimalGlobalRates1982} for the problem of the estimation of the derivative (in a $L_2$ sense).

\section{Numerical Experiments}\label{sec:exp}

In order to motivate the need for a robust estimator of the gradient, we introduce three different examples of use of our estimator compared to existing approaches. All the code to reproduce the experiments and figures can be found at \url{https://github.com/removed/removed}.

As our estimator is sensitive to the choice of hyperparameters $k$ and $\lambda$ we use a local leave-one-out procedure described in Algorithm~\ref{alg:localcv} for hyperparameter selection. As only the regression variable $Y$ is observed, the regression error is used as a proxy loss in the cross-validation. 
\begin{algorithm}
    \caption{Local Leave-One-Out}\label{alg:localcv}
    \begin{algorithmic}[1] %
        \Require $x$: sample point, $(X, Y)$: training set, $(K, \Lambda)$: grid
        \State{$X_{\text{LoO}} \gets \texttt{Neighbourhood of } x \texttt{ in } X \texttt{ of size } N$}
        \For{$k \in K, \lambda \in \Lambda$}
            \For{$X_i \in X_{\text{LoO}}$}
                \State{$m_i,  \beta_i \gets \texttt{estimated gradient at } X_i \texttt{ w.r.t } X, Y$ \texttt{ using }~\eqref{lll}}
            \EndFor
            \State{$\texttt{error}_{k, \lambda} \gets \frac{1}{N} \sum_{i=1}^N (m_i - Y_i)^2$}
        \EndFor
        \State{$k^\star, \lambda^\star \gets \argmin_{k, \lambda} \texttt{error}_{k, \lambda}$}
        \State{\textbf{return} $k^\star, \lambda^\star$}
    \end{algorithmic}
\end{algorithm}

\subsection{Variable Selection}

While a large number of observations is desirable the same is not necessarily the case for the individual features; a large number of features can be detrimental to the computational performance of most learning methods but also harmful to the actual performance. In order to mitigate the detrimental impact of the high dimensionality, or \emph{curse of dimensionality}, one can try to reduce the effective dimension of the problem. A large body of work exists on dimensionality reduction as a preprocessing step that considers the intrinsic dimensionality of $X$ by considering for example that $X$ lies on a lower-dimensional manifold. Those approaches only consider $X$ in isolation and do not take into account $Y$ which is the variable of interest. It is possible to use the information in $Y$ to direct the dimension reduction of $X$, either by treating $Y$ as side information, as is done in \cite{bachPredictiveLowrankDecomposition2005}, or by considering the existence of an explicit \emph{index space} such that $Y_i = g(v_1^\intercal X_i, \cdots, v_m^\intercal X_i) + \varepsilon_i$ as is done in~\cite{dalalyanNewAlgorithmEstimating2008}. In the latter case, it is possible to observe that the $\emph{index space}$ lies on the subspace spanned by the gradient.

In contrast with the work of~\cite{dalalyanNewAlgorithmEstimating2008} our approach is local and it is therefore possible to retrieve a different subspace in different regions of $\mathbb{R}^D$. As localizing the estimator increases its variance, we choose to only identify the dimensions of interest instead of estimating the full projection matrix.
We introduce Algorithm~\ref{alg:LocalLinearTree} to exploit the local aspect of our estimator in order to direct the cuts in a random tree: at each step, cuts are drawn randomly with probability proportional to estimated mean absolute gradient in the cell.
\begin{algorithm}
    \caption{Node Splitting for Gradient Guided Trees}\label{alg:LocalLinearTree}
    \begin{algorithmic}[1] %
        \Require $(X, Y)$: training set, $\texttt{Node}$: indexes of points in the node
        \State{$\nabla m (X_i) \gets \texttt{estimated gradient at } X_i, \, \forall i \in \texttt{Node}$ \texttt{ using }~\eqref{lll}}
        \State{$\omega \gets \sum_{i \in \texttt{Node}} \lvert \nabla m(X_i) \rvert$}
        \State{$K \gets \texttt{sample } \sqrt{D} \texttt{ dimensions in } \{1, \ldots, d\} \texttt{ with probability weights} \propto \omega$}
        \State{$k, c \gets \texttt{best threshold } c \texttt{ and dimension } k$}
        \State{$\textbf{return } k, c$}
    \end{algorithmic}
\end{algorithm}
We demonstrate the improvements brought by guiding the cuts by the local information provided by the gradient by comparing the performance of a vanilla regression random forest with the same procedure but with local gradient information. 
We consider five datasets: the Breast Cancer Wisconsin (Diagnostic) Data Set introduced in~\cite{streetNuclearFeatureExtraction1993}; the Heart Disease dataset introduced by~\cite{detranoInternationalApplicationNew1989}; the classic Diamonds Price dataset; the Gasoline NIR dataset introduced by \citep{kalivasTwoDataSets1997} and the Sloan Digital Sky Survey DR14 dataset of~\cite{abolfathiFourteenthDataRelease2018}.

As seen in Table~\ref{table:results}, gradient guided split sampling consistently outperform the vanilla variant. When all variables are relevant, as is the case when the variables were carefully selected by the practitioner with prior knowledge, our variant performs similarly to the original algorithm while performance is greatly improved when only a few variables are relevant, such as in the NIR dataset \citep{portierBootstrapTestingRank2014}.
\begin{table}
    \centering
    \begin{tabular}{lrrrr}
        \toprule
        & \multicolumn{2}{c}{Description} & \multicolumn{2}{c}{Loss} \\
        \cmidrule(l){2-3} \cmidrule(l){4-5} \\
        Dataset & $n$ & $D$ & Random Forest & Gradient Guided Forest \\
        \midrule
        Wisconsin & $569$ & $30$ & $0.0352 \pm 3.29\times10^{-4}$ & $\mathbf{0.0345} \pm 3.35\times10^{-4}$ \\
        Heart Disease & $303$ & $13$ & $0.128 \pm 6.6\times10^{-4}$ & $\mathbf{0.124} \pm 8.6\times10^{-4}$ \\
        Diamonds & $53940$ & $23$ & $680033 \pm 3.45\times10^{9}$ & $\mathbf{664265} \pm 2.81\times10^{9}$ \\
        Gasoline NIR & $60$ & $401$ & $0.678 \pm 0.451$ & $\mathbf{0.512} \pm 0.347$ \\
        SDSS & $10000$ & $8$ & $0.872\times10^{-3} \pm 4.50\times10^{-6}$ & $\mathbf{0.776}\times10^{-3} \pm 6.00\times10^{-6}$ \\
        \bottomrule
    \end{tabular}
    \caption{Performance of the two random forest variants}\label{table:results}
\end{table}

\subsection{Gradient Free Optimization}

Many of the recent advances in the field of machine learning have been made possible in one way or another by advances in optimization; both in how well we are able to optimize complex function and what type of functions we are able to optimize if only locally. Recent advances in automatic differentiation as well as advances that push the notion of \emph{what} can be differentiated have given rise to the notion of \emph{differentiable programming} \citep{innesDifferentiableProgrammingSystem2019} in which a significant body of work can be expressed as the solution to a minimization problem usually then solved by gradient descent.

We study here the use of the local linear estimator of the gradient in Algorithm~\ref{alg:lolamin} in cases where analytic or automatic differentiation is impossible, and compare it to a standard gradient free optimization technique as well as the oracle where the gradients are known.

We minimize the standard but challenging Rosenbrock function:
\begin{align}
    f(x) = 100 \sum_{i=1}^{d-1} (x_{i+1} - x_i)^2 + (x_i - 1)^2.
\end{align}

\begin{algorithm}
    \caption{Estimated Gradient Descent}\label{alg:lolamin}
    \begin{algorithmic}[1] %
        \Require $x_0$: initial guess, $f$: function $\mathbb{R}^D \to \mathbb{R}$, $M$: budget
        \State{$X \gets X_1, \ldots, X_M \texttt{ with } X_i \sim \mathcal{N}(x_0, \varepsilon \times I_D)$}
        \State{$Y \gets f(X) := f(X_1), \ldots, f(X_M)$}
        \While{\texttt{not StoppingCondition}}
            \State{$m, \Delta \gets \texttt{estimated gradient at } x \texttt{ w.r.t } X, Y$ \texttt{ using }~\eqref{lll}}
            \State{$X \gets X, X_1, \ldots, X_M \texttt{ with } X_i \sim \mathcal{N}(\texttt{GradientStep}(x, \Delta), \varepsilon \times I_D)$}
            \State{$Y \gets f(X)$}
            \State{$x \gets \argmin_{X_i} \{ f(X_i) \}$}
        \EndWhile
        \State{$\textbf{return } x$}
    \end{algorithmic}
\end{algorithm}

We apply the previous method to the minimization of the log-likelihood of a logistic model on the UCI's Adult data set, consisting of $48842$ observations and $14$ attibutes amounting to $101$ dimensions once one-hot encoded and an intercept added.
\begin{align*}
    \mathcal{L}_\theta (X) =  - \sum_i Y_i \log (1 + \exp (-\theta X_i)) -  (1 - Y_i) \log (1 + \exp (\theta X_i)), \, \theta \in \mathbb{R}^{101}.
\end{align*}
\begin{figure}[hbt!]
    \centering
    \begin{minipage}{0.66\textwidth}
        \begin{subfigure}[T]{0.5\linewidth}
            \centering
            \input{figs/rosenbrock_50.tikz} %
        \end{subfigure}
        \begin{subfigure}[T]{.5\linewidth}
            \centering
            \input{figs/rosenbrock_100.tikz} %
        \end{subfigure}
        \caption{Rosenbrock function}
    \end{minipage}
    \begin{minipage}{0.33\textwidth}
        \vspace{0.3cm}
        \begin{subfigure}[T]{1.\linewidth}
            \hspace{-0.5cm}
            \centering
            \input{figs/logisticreg.tikz} %
        \end{subfigure}
        \caption{Logistic Regression}
    \end{minipage}
\end{figure}

\subsection{Disentanglement}

\emph{Disentangled Representation Learning} aims to learn a representation of the input space such that the independent dimensions of the representation each encode separate but meaningful attributes of the original feature space.
We show here how our estimator can be useful for retrieving the dimensions associated with a concept in a supervised manner.

A $\beta$-VAE \citep{higginsBetaVAELearningBasic2017} model is trained on the \texttt{CACD2000} dataset of celebrity faces with age labels to first build low-dimensional representations of the images and then extract the direction relating to age. We learn $\mathcal{E}_{\phi}$ and $\mathcal{D}_{\theta}$ parameterizing $q_\phi$ and $p_\theta$, to minimize the loss
\begin{equation}
    \mathcal{L} (\theta, \phi; x, z, \beta) = \mathbb{E}_{q_\phi (z \mid x)} \left[ \log p_\theta (x \mid z) \right] - \beta \infdiv{q_\phi x}{x},
\end{equation}
where $\beta$ acts as a constraint on the representational power of the latent distribution; $\beta = 1$ leads to the standard VAE formulation of \cite{kingmaAutoEncodingVariationalBayes2014} while $\beta > 1$ increases the level of disentanglement.
We learn a $512$-dimensional representation of the $128\times 128$ images and encode all the \texttt{CACD2000} images (see Appendix for details).

Using our estimator it is possible to estimate the gradient $\nabla m$ of $\mathbb{E} [ Y \mid Z = z ]$ with respect to the latent variable $Z$ (illustrated in the Appendix). It is then possible to analyse the sparsity of $\nabla m$ to quantify the quality of the disentanglement for varying level of $\beta$ by quantifying how far from a single dimension the gradient for the age is concentrated. As the true dimension is unknown, we instead measure the angular distance to all dimensions reweighted by the magnitudes of the partial derivatives:
\begin{align}
    \sum_i \frac{\lvert\hat \nabla_i m(x) \rvert}{\lvert \hat \nabla m(x) \rvert} \cos (e_i, \frac{1}{n} \sum_k \lvert \hat \nabla m(x) \rvert), \, \text{where} \, \cos (a, b) = \frac{a \cdot b}{\lVert a \rVert \lVert b \rVert}.
\end{align}
We observe in Figure~\ref{fig:disentangle} that as $\beta$ increases the age slowly become disentangled, as expected if one considers the age to be an important and independent characteristic of human faces.
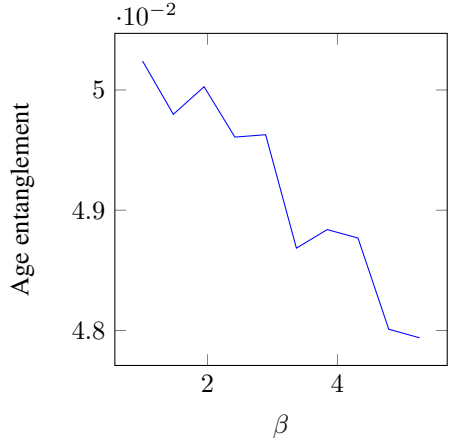
\begin{figure}[h]
    \centering
    \begin{tikzpicture}
\begin{axis}[xlabel={$\beta$}, ylabel={Age entanglement},width=6cm,height=6cm]
    \addplot+[no marks]
        table[row sep={\\}]
        {
            x  y  \\
            1.0  0.05024031549692154  \\
            1.4736842  0.04979776591062546  \\
            1.9473684  0.05002676323056221  \\
            2.4210527  0.04960823431611061  \\
            2.8947368  0.049627795815467834  \\
            3.368421  0.04868478327989578  \\
            3.8421052  0.04883917048573494  \\
            4.3157897  0.048768892884254456  \\
            4.7894735  0.0480106882750988  \\
            5.263158  0.04793939366936684  \\
        }
        ;
\end{axis}
\end{tikzpicture}
    \caption{Quality of disentanglement with respect to the age} \label{fig:disentangle}
\end{figure}

While not an entirely adequate metric for disentanglement, not only because disentanglement does not necessarily require the dimensions to be the one an observer expected but more importantly because this metric requires an annotated dataset; we believe this metric can be useful for practitioners. By measuring how close the estimated gradients are to the axis, with respect to an annotated dataset of characteristics of interest, a practitioner can ensure his model is sufficiently disentangled for downstream tasks such as face manipulation by a user. We also believe it is possible to design an end-to-end differentiable framework in order to force disentanglement to consider the characteristics of interest: our estimator is the solution to a convex optimization program and as such admits an adjoint and it is therefore possible to fit a local linear estimator inside an automatic differentiation framework such as done in \cite{agrawalDifferentiableConvexOptimization2019}.
\section{Conclusion}\label{sec:conclusion}
In this paper, we have studied the estimator of the gradient of the (supposedly sparse) regression function obtained by solving a regularized local linear version of the $k$-NN problem with a $\ell_1$ penalty. Nonasymptotic bounds for the local estimation error have been established, improving upon those obtained for alternative methods in sparse situations.
Beyond its theoretical properties and its computational simplicity, the local estimation method promoted here is shown to be the key ingredient for designing efficient algorithms for variable selection and $M$-estimation, as supported by various numerical experiments. Hopefully,  this work shall pave the way to the elaboration of novel statistical learning procedures that exploits the local structure of the gradient, and that the theory will be extended to take into account the underlying geometry of the space in order to obtain a fast convergence rate depending on the true intrinsic dimension instead of the ambient dimension.
\newpage
\section*{Broader Impact}\label{sec:impact}
In this work we show both theoretically and empirically that is possible to derive estimates of the gradient of an unknown function even in a high-dimensional and low sample regime. More importantly we show through several experiments that these estimates are sufficiently accurate to be of use in downstream tasks such variable selection and optimization. We believe those encouraging results should empower practitioners to make use of rough gradient estimates when they have reasons to believe such a gradient exists, especially in cases where the direction is more important than the exact value of the gradient.

The theoretical results presented in this paper do not present any foreseeable societal consequence or ethical problems.

\bibliographystyle{abbrvnat6}
\bibliography{Biblio}

\section{Technical Proofs}

\subsection{Additional Result - Pointwise Nearest Neighbour Estimation of the Regression Function}

Though it concerns the local estimation error, the bound  in the theorem below can be viewed as a refinement of the nonasymptotic results recently established in \cite{jiangNonAsymptoticUniformRates2019} (see also \cite{kpotufeKNNRegressionAdapts2011}), which provide uniform bounds in $x$. It requires a local smoothness condition for the regression function. Throughout this subsection, $\vert\vert .\vert\vert$ denotes any norm on $\mathbb{R}^D$.

\begin{assumption}\label{cond:lip}
The regression function $m(z)$ is $L_1$-Lipschitz at $x$, \textit{i.e.} there exists $L_1>0$ such that for all $z \in \mathcal{B}(x, \tau_0)=\{x'\in \mathbb{R}^D:\; \vert\vert x'-x\vert\vert \leq \tau_0  \}$,
\begin{equation*}
|m (x) - m (z) | \leq  L_1\|x-z\| .
\end{equation*}
\end{assumption}

\begin{theorem}\label{theorem:loco}
Suppose that assumptions \ref{cond:density}, \ref{cond:sub_gaussian_inovation} and \ref{cond:lip} are fulfilled and that $2  k  \leq n \tau_0  b_fV_D $. Then for any $\delta \in (0,1)$ such that  $ k\geq 4   \log(2n/\delta)   $, we have with probability $1-\delta$:
\begin{align*}
|\hat m_k (x) - m  (x)|  \leq  \sqrt{ \frac{2  \sigma^2 \log(4/\delta)}{k} } + L_1 \left (\frac{ 2  k }{ n b_fV_D}  \right)^{1/ D},
\end{align*}
where $V_D= \int \mathds{1}_{\{x\in \mathcal{B}(0,1)\}}dx $ denotes the volume of the unit ball.
\end{theorem}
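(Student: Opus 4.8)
The plan is to follow the classical bias--variance decomposition already recorded in \eqref{decomp_bias_var}, namely
$$\hat m_k(x) - m(x) = \big(\hat m_k(x) - m_k(x)\big) + \big(m_k(x) - m(x)\big),$$
with $m_k(x) = (1/k)\sum_{i:\,X_i\in\mathcal{B}(x,\hat\tau_k(x))} m(X_i)$, and to bound the deterministic (bias) term and the stochastic (variance) term separately, each on an event of probability at least $1-\delta/2$, before intersecting the two.

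First I would treat the bias term. Since (almost surely) exactly $k$ of the $X_i$ lie in $\mathcal{B}(x,\hat\tau_k(x))$ and each of them satisfies $\|X_i - x\| \le \hat\tau_k(x)$, the $L_1$-Lipschitz property of Assumption \ref{cond:lip} gives the deterministic bound
$$|m_k(x) - m(x)| \le \frac1k\sum_{i:\,X_i\in\mathcal{B}(x,\hat\tau_k(x))} |m(X_i) - m(x)| \le L_1\,\hat\tau_k(x).$$
It then remains to show that $\hat\tau_k(x) \le \overline\tau_k := (2k/(n b_f V_D))^{1/D}$ with high probability. Here I would use the key equivalence $\{\hat\tau_k(x) > \overline\tau_k\} = \{\,\#\{i : X_i \in \mathcal{B}(x,\overline\tau_k)\} < k\,\}$. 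The condition $2k \le n\tau_0 b_f V_D$ ensures $\overline\tau_k \le \tau_0$, so by Assumption \ref{cond:density} the mass $p := \mathbb{P}(X\in\mathcal{B}(x,\overline\tau_k))$ satisfies $p \ge b_f V_D \overline\tau_k^{\,D} = 2k/n$. As $\#\{i : X_i \in \mathcal{B}(x,\overline\tau_k)\}$ is $\mathrm{Binomial}(n,p)$ with mean $np \ge 2k$, a multiplicative Chernoff bound controls the lower-tail event $\{\mathrm{Binomial}(n,p) < k\}$, which gives $\mathbb{P}(\hat\tau_k(x) > \overline\tau_k) \le \delta/2$ and is guaranteed by the assumed lower bound $k \ge 4\log(2n/\delta)$.

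Next, for the variance term, write $\hat m_k(x) - m_k(x) = (1/k)\sum_{i\in I}\varepsilon_i$, where $I$ is the (random) index set of the $k$ nearest neighbours of $x$ and $\varepsilon_i = Y_i - m(X_i)$. The decisive observation is that $I$ is a measurable function of $(X_1,\dots,X_n)$ alone, whereas by Assumption \ref{cond:sub_gaussian_inovation} the $\varepsilon_i$ are i.i.d.\ $\sigma^2$-sub-Gaussian and independent of the $X_i$. Hence, conditionally on $(X_1,\dots,X_n)$, the sum $\sum_{i\in I}\varepsilon_i$ is a sum of $k$ independent $\sigma^2$-sub-Gaussian variables, so $(1/k)\sum_{i\in I}\varepsilon_i$ is $(\sigma^2/k)$-sub-Gaussian. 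The standard sub-Gaussian tail bound then yields, conditionally and therefore unconditionally,
$$\mathbb{P}\!\left(\Big|\tfrac1k\sum_{i\in I}\varepsilon_i\Big| > t\right) \le 2\exp\!\Big(-\frac{k t^2}{2\sigma^2}\Big),$$
and choosing $t = \sqrt{2\sigma^2\log(4/\delta)/k}$ makes the right-hand side equal to $\delta/2$. On the intersection of this event with $\{\hat\tau_k(x)\le\overline\tau_k\}$, of probability at least $1-\delta$, the triangle inequality combines the two estimates into the claimed bound, since $L_1\hat\tau_k(x)\le L_1\overline\tau_k$.

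The main obstacle is the radius control: one must correctly translate the random $k$-NN radius into a binomial lower-tail event, lower bound the ball mass by $b_f V_D \overline\tau_k^{\,D}$ through Assumption \ref{cond:density}, and then extract the sharp dependence on $k$ via Chernoff rather than through a cruder VC- or Hoeffding-type argument. It is precisely this step that both produces the $\overline\tau_k$ scaling of the bias and pins down the mild logarithmic requirement $k \ge 4\log(2n/\delta)$. By comparison, the variance step is routine once the conditioning argument decoupling $I$ from the innovations $\varepsilon_i$ has been made precise.
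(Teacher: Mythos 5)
Your proposal is correct and follows essentially the same route as the paper: the bias--variance decomposition \eqref{decomp_bias_var}, the Lipschitz bound $|m_k(x)-m(x)|\le L_1\hat\tau_k(x)$ combined with a Chernoff lower-tail control of the binomial count in $\mathcal{B}(x,\overline\tau_k)$ to get $\hat\tau_k(x)\le\overline\tau_k$ (the paper's Proposition 8), and the conditional sub-Gaussianity of $\frac1k\sum_{i\in I}\varepsilon_i$ given the design (the paper's Proposition 10). No substantive difference.
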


We obtain a weaker condition on the value of $k$ than that obtained in \cite{jiangNonAsymptoticUniformRates2019} (see  Corollary 1 therein), due to our different treatment of the approximation term (the second term in decomposition \eqref{decomp_bias_var}) is different (see the argument detailed in the Supplementary Material). With $k\sim n^{2/(2+D)}$, the bound stated above yields the minimax rate $n^{-1/(D+2)}$.

\subsection{Auxiliary Results}
As a first go, we recall or prove various auxiliary results that are involved in the proof of Theorem \ref{th:gradient_ell2}, and in that of Theorem \ref{theorem:loco} as well.
\medskip

The following inequality follows from the well-known Chernoff bound, see \textit{e.g.} \citep{boucheronConcentrationInequalitiesNonasymptotic2013}.

\begin{lemma}\label{lemma=chernoff}
Let $(Z_i)_{i\geq 1}$ be a sequence of i.i.d. random variables valued in $\{0,1\}$. Set $\mu =  n \mathbb E [Z_1]$ and $S = \sum_{i=1} ^n Z_i $.   For any $\delta \in (0,1)$ and all $n\geq 1$, we have with probability $1-\delta$:
\begin{align*}
S \geq \left(1- \sqrt{ \frac{2 \log(1/\delta)  }{  \mu} } \right) \mu  .
\end{align*}
 In addition, for any $\delta \in (0,1)$ and $n\geq 1$, we have with probability $1-\delta$:
\begin{align*}
S \leq \left(1 +  \sqrt{ \frac{3 \log(1/\delta)   }{  \mu} }  \right) \mu  .
\end{align*}
\end{lemma}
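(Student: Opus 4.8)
The plan is to derive both inequalities as direct consequences of the standard multiplicative Chernoff bounds for sums of independent Bernoulli-type random variables, reading off the form stated here by inverting the exponential tail in terms of the confidence level $\delta$. Since the $Z_i$ take values in $\{0,1\}$ and are i.i.d., $S=\sum_{i=1}^n Z_i$ is a sum of independent $[0,1]$-valued variables with $\mathbb{E}[S]=\mu$, so the classical Chernoff method applies verbatim.

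For the lower tail, I would start from the multiplicative Chernoff bound $\mathbb{P}(S \leq (1-t)\mu) \leq \exp(-\mu t^2/2)$, valid for $t\in(0,1)$, which follows from the exponential Markov inequality $\mathbb{P}(S\leq (1-t)\mu)\leq \inf_{s>0} e^{s(1-t)\mu}\,\mathbb{E}[e^{-sS}]$ together with the bound $\mathbb{E}[e^{-sZ_1}]\leq \exp(\mathbb{E}[Z_1](e^{-s}-1))$ and optimization over $s$. Setting the right-hand side equal to $\delta$ gives $t=\sqrt{2\log(1/\delta)/\mu}$, and then the complementary event $\{S > (1-t)\mu\}$ has probability at least $1-\delta$, which is exactly the first claim. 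For the upper tail I would use the companion bound $\mathbb{P}(S \geq (1+t)\mu) \leq \exp(-\mu t^2/3)$, valid for $t\in(0,1)$; this is where the constant $3$ (rather than $2$) appears, reflecting the slightly weaker control available on the upper side of the Bernoulli concentration. Setting $\exp(-\mu t^2/3)=\delta$ yields $t=\sqrt{3\log(1/\delta)/\mu}$, and passing to the complement gives the stated upper bound with probability $1-\delta$.

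The two inequalities thus follow by the same template: invoke the appropriate one-sided multiplicative Chernoff inequality, set its exponential bound equal to $\delta$, solve for the deviation parameter $t$, and take complements. There is essentially no obstacle here, since these are textbook bounds (see e.g. \citep{boucheronConcentrationInequalitiesNonasymptotic2013}); the only points requiring minor care are the validity range $t\in(0,1)$ for the standard form of the bounds and the asymmetric constants $2$ versus $3$. I would note that when $\mu$ is small enough that the computed $t$ would exceed $1$, the lower-tail statement remains trivially true because the factor $(1-t)$ becomes nonpositive and $S\geq 0$ holds surely, so no separate argument is needed to cover that regime.
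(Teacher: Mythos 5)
Your proposal is correct and follows essentially the same route as the paper: the paper likewise starts from the classical multiplicative Chernoff tails $\bigl(\exp(-t)/(1-t)^{1-t}\bigr)^{\mu}$ and $\bigl(\exp(t)/(1+t)^{1+t}\bigr)^{\mu}$, passes to the Gaussian-type forms $\exp(-\mu t^{2}/2)$ and $\exp(-\mu t^{2}/3)$, and inverts in $\delta$, and your remark that the lower-tail claim is vacuous when $t\geq 1$ matches the paper's "the bound being obvious when $t\geq 1$." The only caveat, shared equally by the paper's own one-line treatment of the upper tail, is that $\exp(-\mu t^{2}/3)$ ceases to dominate the exact upper Chernoff tail once $t$ exceeds roughly $1.8$, so the restriction on $t$ for the second inequality deserves the same explicit mention you gave the first.
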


\begin{proof}
Using the Chernoff lower tail \citep{boucheronConcentrationInequalitiesNonasymptotic2013}, for any $t > 0 $ and $n\geq 1$, it holds that
\begin{align*}
\mathbb P \left( S < (1-t ) \mu \right) \leq \left( \frac{\exp(-t) }{ (1- t) ^{1-t} }  \right)^\mu.
\end{align*}
Because for any $t\in (0,1)$, ${\exp(-t) } / { (1- t) ^{1-t} } \leq  \exp( - {t^2  }/{2} ) $, we obtain that for any $t > 0 $ and $n\geq 1$,
\begin{align*}
\mathbb P \left( S < (1-t ) \mu \right) \leq \exp\left( - \frac{t^2 \mu }{2} \right) ,
\end{align*}
the bound being obvious when $t\geq 1$. In the previous bound ,choose $t = \sqrt{ {2 \log(1/\delta)  } /{  \mu}}$ to get the stated inequality. The second inequality is obtained by inverting the Chernoff upper tail:
\begin{align*}
\mathbb P \left( S > (1+t ) \mu \right) \leq \left( \frac{\exp(t) }{ (1+ t) ^{1+ t} }  \right)^\mu.
\end{align*}

\end{proof}

The following inequality is a well-known concentration inequality for sub-Gaussian random variables, see \textit{e.g.} \citep{boucheronConcentrationInequalitiesNonasymptotic2013}.

\begin{lemma}\label{lemma:SG}
Suppose that $Z$ is sub-Gaussian with parameter $s^2>0$, i.e., $Z$ is real-valued, centered and for all $\lambda>0$, $\mathbb E [ \exp(\lambda Z)  ] \leq \mathbb E [ \exp( \lambda^2 s^2 / 2)]$, then with probability $1-\delta$,
\begin{align*}
|Z|  \leq \sqrt {  2s ^2 \log( 2 / \delta) } .  
\end{align*} 
\end{lemma}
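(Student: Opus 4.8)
The plan is to apply the classical Chernoff (exponential Markov) argument to turn the moment generating function bound defining sub-Gaussianity into a tail estimate, optimise over the free exponential parameter, and then invert the resulting confidence level to express the threshold in terms of $\delta$.

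First I would fix $t>0$ and $\lambda>0$ and write, by Markov's inequality applied to the nonnegative random variable $\exp(\lambda Z)$,
\[
\mathbb{P}(Z\geq t)=\mathbb{P}\bigl(e^{\lambda Z}\geq e^{\lambda t}\bigr)\leq e^{-\lambda t}\,\mathbb{E}\bigl[e^{\lambda Z}\bigr]\leq e^{-\lambda t+\lambda^2 s^2/2},
\]
where the last inequality is exactly the sub-Gaussian hypothesis. The exponent $-\lambda t+\lambda^2 s^2/2$ is a convex quadratic in $\lambda$ minimised at $\lambda=t/s^2$, and substituting this optimal value yields the Gaussian-type upper tail $\mathbb{P}(Z\geq t)\leq \exp(-t^2/(2s^2))$.

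Next I would obtain the matching lower tail. Since the defining inequality holds for the relevant range of $\lambda$, the variable $-Z$ is itself sub-Gaussian with the same parameter $s^2$, so the previous display applied to $-Z$ gives $\mathbb{P}(Z\leq -t)=\mathbb{P}(-Z\geq t)\leq\exp(-t^2/(2s^2))$. A union bound over these two one-sided events then produces the two-sided estimate $\mathbb{P}(|Z|\geq t)\leq 2\exp(-t^2/(2s^2))$. Finally I would calibrate $t$ to the desired level: setting $2\exp(-t^2/(2s^2))=\delta$ and solving gives $t=\sqrt{2s^2\log(2/\delta)}$, so that $\mathbb{P}(|Z|\geq \sqrt{2s^2\log(2/\delta)})\leq\delta$, which is precisely the complement of the claimed event.

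There is no genuine analytic difficulty here, as this is a standard concentration estimate; the only point requiring a little care is the passage from the one-sided to the two-sided bound. The hypothesis as written controls $\mathbb{E}[e^{\lambda Z}]$, and to reach the lower tail one must invoke the same control for $-Z$ (equivalently, for negative values of $\lambda$), which is part of the symmetric notion of sub-Gaussianity used throughout, consistent with Assumption~\ref{cond:sub_gaussian_inovation} where the bound is imposed for all $\lambda\in\mathbb{R}$. Once this is noted, the remaining work is only the routine Chernoff optimisation and the algebraic inversion of the tail.
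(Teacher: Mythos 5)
Your proof is correct: the paper does not prove this lemma at all (it simply cites it as a well-known fact from \cite{boucheronConcentrationInequalitiesNonasymptotic2013}), and your Chernoff argument --- Markov on $e^{\lambda Z}$, optimisation at $\lambda = t/s^2$, symmetrisation to $-Z$, union bound, and inversion at level $\delta$ --- is exactly the standard derivation that citation stands for. You were also right to flag that the lemma as stated only posits the moment generating function bound for $\lambda>0$, so the lower tail genuinely requires the symmetric bound for all $\lambda\in\mathbb{R}$, which is what Assumption~\ref{cond:sub_gaussian_inovation} provides where the lemma is actually applied.
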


We shall also need a concentration inequality tailored to Vapnik-Chervonenkis (VC) classes of functions. The result stated in Lemma \ref{vc_bound} below is mainly a consequence of the work in \cite{gineConsistencyKernelDensity2001}. Our formulation is slightly different, the role played by the VC constants ($v$ and $A$ below) being clearly quantified. 

Let $\mathcal F$ be a bounded class of measurable functions defined on $\mathcal X$. Let $U$ be a uniform bound for the class $\mathcal F$, i.e. $|f(x)| \leq U$ for all $f\in \mathcal F$ and $x\in \mathcal X$. The class $\mathcal F$ is called VC with parameters $(v ,A)$ and uniform bound $U$ if
\begin{align*}
\sup_Q\mathcal N  \left( \epsilon U , \mathcal F , L_2 (Q)  \right)\leq  \left( \frac A \epsilon \right)  ^{v }  ,
\end{align*}
where $\mathcal N (.,\mathcal{F},L_2(Q))$ denotes the covering numbers of the class $\mathcal{F}$ relative to $L_2(Q)$, see \textit{e.g.} \citep{vaartWeakConvergenceEmpirical1996}.  For notational simplicity and with no loss of generality, we require in the definition of a VC class that $A\geq 3\sqrt e$ and $v\geq 1$. Define 
 $\sigma^2 \geq \sup_{f\in \mathcal F} \Var(f (X_1) )$. We shall work with the conditions
\begin{align}
\label{eq:constant_n_delta_1}  &\sqrt n \sigma \geq c_1  \sqrt {U^2 v\log(AU / \sigma ) },\\
\label{eq:constant_n_delta_2}   &\sqrt n \sigma  \geq c_2 \sqrt { U^2  \log( 2 / \delta)} ,
\end{align} 
where the constant $c_1$ and $c_2$ are specified in the following statement.
\begin{lemma}\label{vc_bound}
Let $\mathcal F$ be a VC class of functions with parameters $(v,A)$ and uniform bound $U>0$ such that $\sigma \leq U$.  Let $n\geq 1$ and $\delta\in (0,1) $. There are three positive universal constants $c_1$, $c_2$ and $c_3$ such that, under conditions \eqref{eq:constant_n_delta_1}  and \eqref{eq:constant_n_delta_2}, we have with probability $1-\delta$,
\begin{align*}
&  \sup_{f\in \mathcal F} \left| \sum_{i=1} ^n \{ f(X_i) - \mathbb E f(X_1) \}\right|   \leq c_3 \sqrt { n\sigma^2 v \log( A U / (\sigma \delta) ) }   .
  \end{align*} 
\end{lemma}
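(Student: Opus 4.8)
The plan is to combine a bound on the \emph{expectation} of the supremum of the empirical process with a Talagrand-type concentration inequality, carefully tracking the dependence on the VC parameters $v$, $A$, the uniform bound $U$ and the variance proxy $\sigma$. Write $Z = \sup_{f\in\mathcal{F}} \left| \sum_{i=1}^n \{f(X_i) - \mathbb{E}f(X_1)\} \right|$ for the quantity of interest, and recall that the conditions \eqref{eq:constant_n_delta_1} and \eqref{eq:constant_n_delta_2} are precisely what is needed to place us in the regime where the variance term dominates.

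First I would bound $\mathbb{E}[Z]$. By the standard symmetrization inequality (see \citep{vaartWeakConvergenceEmpirical1996}), $\mathbb{E}[Z] \leq 2\,\mathbb{E}\left[\sup_{f\in\mathcal{F}}\left|\sum_i \epsilon_i f(X_i)\right|\right]$, where $(\epsilon_i)$ are independent Rademacher signs. Conditionally on the sample the symmetrized process is sub-Gaussian for the random metric induced by $L_2(\mathbb{P}_n)$, so Dudley's entropy bound applies. Plugging in the VC covering-number estimate $\mathcal{N}(\epsilon U, \mathcal{F}, L_2(Q)) \leq (A/\epsilon)^v$ and integrating yields a bound of order $\sqrt{n\,v}\,\bar\sigma_n\sqrt{\log(AU/\bar\sigma_n)}$, where $\bar\sigma_n^2 = \sup_{f} n^{-1}\sum_i f(X_i)^2$ is the empirical variance proxy. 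The delicate point is to replace $\bar\sigma_n$ by the population quantity $\sigma$: this is exactly where \eqref{eq:constant_n_delta_1} enters, since it forces the expected supremum to be small relative to $n\sigma^2$, so that $\mathbb{E}[\bar\sigma_n^2]$ remains within a constant multiple of $\sigma^2$ after a short self-bounding argument (the class $\{f^2\}$ inherits the VC structure up to the factor $U$). This delivers $\mathbb{E}[Z] \leq c\,\sqrt{n\sigma^2 v \log(AU/\sigma)}$ for a universal constant $c$, which is the content of the moment inequality in \citep{gineConsistencyKernelDensity2001}.

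Next I would upgrade this in-expectation bound to a high-probability statement via Talagrand's concentration inequality for the supremum of an empirical process, in Bousquet's form (see \citep{boucheronConcentrationInequalitiesNonasymptotic2013}). Applied to the centered class $\{f - \mathbb{E}f : f\in\mathcal{F}\}$, whose elements are bounded by $2U$ and have variance at most $\sigma^2$, it gives, with probability $1-\delta$,
\begin{align*}
Z \leq \mathbb{E}[Z] + \sqrt{2\log(1/\delta)\,(n\sigma^2 + 4U\,\mathbb{E}[Z])} + \tfrac{2U}{3}\log(1/\delta).
\end{align*}
Condition \eqref{eq:constant_n_delta_2} ensures that $U\log(1/\delta) \lesssim \sqrt{n\sigma^2\log(1/\delta)}$, so the last Bernstein term is dominated by the sub-Gaussian term, while the cross term $U\,\mathbb{E}[Z]$ is again controlled through \eqref{eq:constant_n_delta_1}. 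Consequently every term on the right-hand side is at most a constant multiple of $\sqrt{n\sigma^2 v \log(AU/\sigma)} + \sqrt{n\sigma^2 \log(1/\delta)}$. Using $A \geq 3\sqrt{e}$ and $U \geq \sigma$ to write $\log(AU/\sigma) + \log(1/\delta) \leq 2\log(AU/(\sigma\delta))$ collapses the two contributions into a single term and yields the claimed bound with an explicit universal constant $c_3$.

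The main obstacle is the control of $\mathbb{E}[Z]$, and within it the passage from the empirical radius $\bar\sigma_n$ to the population $\sigma$. Done naively, the entropy integral is governed by the data-dependent radius $\bar\sigma_n$ and one only recovers the coarse bound with $U$ in place of $\sigma$; obtaining the sharp $\sigma$ inside the square root requires both the localization afforded by Dudley's bound and the fixed-point argument enabled by \eqref{eq:constant_n_delta_1} to close the self-referential estimate for $\mathbb{E}[\bar\sigma_n^2]$. Once the expectation is pinned down in the form $c\,\sqrt{n\sigma^2 v \log(AU/\sigma)}$, the concentration step and the bookkeeping of universal constants are routine.
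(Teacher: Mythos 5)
Your proposal is correct and follows essentially the same route as the paper: an in-expectation bound of order $\sqrt{n\sigma^2 v\log(AU/\sigma)}$ (which the paper imports directly from Gin\'e and Guillou's moment inequalities, using condition \eqref{eq:constant_n_delta_1} to absorb the $U\,v\log(AU/\sigma)$ term), followed by Talagrand's concentration inequality with condition \eqref{eq:constant_n_delta_2} used to dominate the Bernstein-type remainder, and a final merging of the two logarithms. The only cosmetic differences are that you sketch how the expectation bound would be derived (symmetrization, Dudley, self-bounding for the empirical radius) where the paper simply cites it, and you invoke Bousquet's form of Talagrand's inequality where the paper uses the original form and linearizes the logarithm via $t/(2+2t/3)\leq\log(1+t)$.
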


\begin{proof}
Set $\Lambda = v\log(AU / \sigma )$. Using \cite[equation (2.5) and (2.6)]{gineConsistencyKernelDensity2001}, we get
\begin{align*}
\mathbb E\left [\sup_{f\in \mathcal F} \left| \sum_{i=1} ^n \{ f(X_i) - \mathbb E f(X_1) \}\right|\right] \leq  C\left( U \Lambda  +   \sqrt { n\sigma^2 \Lambda} \right) \leq 2 C \sqrt { n\sigma^2 \Lambda} , &\\
\mathbb E\left [\sup_{f\in \mathcal F} \left| \sum_{i=1} ^n \{ f(X_i) - \mathbb E f(X_1) \}^2\right|\right] \leq  \left( \sqrt n \sigma  +   KU  \sqrt {  \Lambda} \right)^2 \leq 4  n \sigma^2 := V&,
\end{align*}
where $C>0$ and $K>0$ are two universal constants. The inequalities below are both obtained by taking $c_1$ large enough. Let
\begin{align*}
&Z = \sup_{f\in \mathcal F} \left| \sum_{i=1} ^n \{ f(X_i) - \mathbb E f(X_1) \}\right|
\end{align*}
We recall Talagrand's inequality \citep[Theorem 1.4]{talagrandNewConcentrationInequalities1996} (or \cite[equation (2.7)]{gineConsistencyKernelDensity2001}), for all $t>0$,
\begin{align*}
\mathbb P \left(| Z - \mathbb E Z | >t  \right )\leq K' \exp\left( - \frac{t}{2K'U} \log( 1+ 2tU / V)   \right),
\end{align*}
where $K'>1$ is a universal constant. Using the fact that for all $t\geq 0$, $  t / (2+2t/3) \leq \log( 1+ t )$, we get
\begin{align*}
\mathbb P \left(| Z - \mathbb E Z | >t  \right )\leq K' \exp\left( -  \frac{t^2  }{2K'(V+ 2tU / 3) }   \right).
\end{align*}
Inverting the bound we find that for any $\delta\in (0,1)$, with probability $1-\delta$,
\begin{align*}
 | Z - \mathbb E Z | &\leq  \sqrt{ 2 K' V \log( K' / \delta)}  +  (4 K' U / 3) \log( K' / \delta) \\
 &\leq   \sqrt{ 2 K' V K''  \log( 2 / \delta)}  +  (4 K' U / 3)K'' \log( 2 / \delta)
  \end{align*} 
for some $K''>0$. Taking $c_2$ large enough, we ensure that $ 2V = 8 n \sigma^2 \geq (4U / 3)^2 K' K''\log(2 / \delta) $. Then using the previous bound on the expectation, it follows that with probability $1-\delta$,
\begin{align*}
 | Z|  & \leq  2 C \sqrt { n\sigma^2 \Lambda} + 2 \sqrt{ 8   n \sigma^2 K'K'' \log( 2 / \delta)}  \\
 & = 2C ( 1 + 8 K' K'')  \sqrt { n\sigma^2} \left( \Lambda + \sqrt{ \log( 2 / \delta)) } \right).
\end{align*} 
We then conclude by using the bound$ \sqrt a + \sqrt b \leq \sqrt 2 \sqrt {a+b} $.
\end{proof}

\subsection{Intermediary Results}
We now prove some intermediary results used in the core of the proof of the main results.

Define
\begin{align*}
\overline{\tau}_{k}  = \left(\frac{ 2  k }{ n b_fV_D}  \right)^{1/ D}.
\end{align*}

\begin{proposition}\label{prop:tau}
Suppose that Assumption \ref{cond:density} is fulfilled and that $\overline \tau_{k} \leq \tau_0$. Then for any $\delta \in (0,1)$ such that  $ k\geq 4   \log(n/\delta)   $, we have with probability $1-\delta$:
\begin{align*}
 \hat \tau_k(x)   \leq \overline \tau _k .
\end{align*}
\end{proposition}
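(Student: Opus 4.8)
The plan is to reduce the statement about the random radius $\hat\tau_k(x)$ to a statement about a binomial count, and then invoke the Chernoff lower tail of Lemma \ref{lemma=chernoff}. First I would record the elementary observation, immediate from the definition of the $k$-NN radius, that
\begin{equation*}
\hat\tau_k(x) \leq \overline\tau_k \quad\Longleftrightarrow\quad \sum_{i=1}^n \mathds{1}_{\{X_i \in \mathcal{B}(x,\overline\tau_k)\}} \geq k,
\end{equation*}
since $\hat\tau_k(x)$ is by definition the infimum of the radii $\tau$ for which the closed ball $\mathcal{B}(x,\tau)$ captures at least $k$ sample points. Writing $S = \sum_{i=1}^n \mathds{1}_{\{X_i \in \mathcal{B}(x,\overline\tau_k)\}}$, it therefore suffices to prove $\mathbb{P}(S \geq k) \geq 1-\delta$.

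The variable $S$ is a sum of $n$ i.i.d. Bernoulli variables with common mean $p = \mathbb{P}(X \in \mathcal{B}(x,\overline\tau_k))$, so that $\mu := n p = \mathbb{E}[S]$. The next step is to bound $\mu$ from below using Assumption \ref{cond:density}. Since $\overline\tau_k \leq \tau_0$ by hypothesis, the ball $\mathcal{B}(x,\overline\tau_k)$ lies inside $\mathcal{B}(x,\tau_0)$, where the density is at least $b_f$; integrating this lower bound over the ball, whose Lebesgue volume is $V_D \overline\tau_k^D$, gives
\begin{equation*}
p \geq b_f V_D \overline\tau_k^D = b_f V_D \cdot \frac{2k}{n b_f V_D} = \frac{2k}{n},
\end{equation*}
by the very definition of $\overline\tau_k$, whence $\mu \geq 2k$.

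Finally I would apply the first (lower-tail) inequality of Lemma \ref{lemma=chernoff}: with probability at least $1-\delta$,
\begin{equation*}
S \geq \left(1 - \sqrt{\frac{2\log(1/\delta)}{\mu}}\right)\mu = \mu - \sqrt{2\mu\log(1/\delta)}.
\end{equation*}
It then remains to check that this right-hand side is at least $k$. The map $u \mapsto u - \sqrt{2u\log(1/\delta)}$ is nondecreasing for $u \geq \log(1/\delta)/2$, and since $\mu \geq 2k \geq 8\log(1/\delta)$ under the assumed condition $k \geq 4\log(n/\delta) \geq 4\log(1/\delta)$, I may substitute the lower bound $2k$ for $\mu$ to obtain
\begin{equation*}
S \geq 2k - \sqrt{4k\log(1/\delta)} = 2k - 2\sqrt{k\log(1/\delta)} \geq k,
\end{equation*}
the last inequality being equivalent to $k \geq 4\log(1/\delta)$, again guaranteed by the hypothesis. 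The only point deserving care --- and the sole mild obstacle --- is this final reduction: one must apply the Chernoff bound at the true, unknown mean $\mu$ and only \emph{afterwards} use the monotonicity just noted to substitute the explicit bound $2k$, rather than plugging $2k$ into the concentration inequality directly. The surplus factor $n$ in the condition $k \geq 4\log(n/\delta)$ is not needed for this single-point statement and merely reflects the uniform-in-$x$ budget used elsewhere in the paper.
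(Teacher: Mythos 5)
Your proof is correct and follows essentially the same route as the paper's: lower-bound $\mathbb{P}(X\in\mathcal{B}(x,\overline\tau_k))$ by $2k/n$ via Assumption \ref{cond:density}, apply the Chernoff lower tail of Lemma \ref{lemma=chernoff}, and use the condition $k\geq 4\log(\cdot/\delta)$ to conclude that at least $k$ sample points fall in $\mathcal{B}(x,\overline\tau_k)$. The only difference is that the paper additionally takes a union bound over the $n$ radii $\overline\tau_1,\ldots,\overline\tau_n$ (hence the $\log(n/\delta)$ in the hypothesis), so the surplus factor $n$ is a uniform-in-$k$ budget rather than a uniform-in-$x$ one as you surmise; your fixed-$k$ version with $\log(1/\delta)$ is valid under the stated hypothesis and in fact slightly sharper, and your explicit monotonicity step justifying the substitution of $2k$ for $\mu$ is a point the paper glosses over.
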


\begin{proof}
 Using Assumption \ref{cond:density} yields
\begin{align*}
\mathbb P (X\in \mathcal{B} (x, \overline \tau_{k} ) )  =   \int_{\mathcal{B} (x, \overline \tau_{k}  )} f_X  \geq b_f \int _{\mathcal{B} (x, \overline \tau_{k}  ) } d\lambda = b_fV_{D} \overline \tau_{k} ^{  D}  = 2k/n.
\end{align*}
Consider the set formed by the $n$ balls $ \mathcal{B} (x, \overline \tau_{k}   ) $, $  1\leq k\leq n $. From Lemma \ref{lemma=chernoff} with $Z_i = \mathds 1 _{ \mathcal{B} (x, \overline \tau_{k}  )} (X_i )$, $\mu \geq 2k$, and the union bound, we obtain that for all $\delta\in (0,1)$ and any $k= 1,\ldots, n$:
\begin{align*}
\sum_{i=1} ^n \mathds 1 _{ \mathcal{B} (x, \overline \tau_{k}  ) } (X_i ) &\geq \left(1- \sqrt{ \frac{2 \log(n/\delta)  }{ 2k} } \right) 2k  .
\end{align*}
As $ k\geq 4   \log(n/\delta)  $, it follows that
\begin{align*}
\sum_{i=1} ^n \mathds 1 _{ \mathcal{B} (x, \overline \tau_{k}  ) } (X_i ) & \geq k - (  \sqrt{ 4 k  \log(n/\delta)} -  k)\geq k.
\end{align*}
Hence $ \mathbb P_n ( \mathcal{B} (x, \overline \tau_{k} ) ) \geq k / n $, denoting by $\mathbb P_n $ the empirical distribution of the $X_i$'s. By definition of $\hat \tau _{k}(x)$ it holds that  $\hat \tau _{k}(x) \leq  \overline \tau _{k}(x)$.
\end{proof}

Define
\begin{align*}
\underline \tau_{k} = \left (\frac{   k  }{ 2n U_f V_D}  \right)^{1/ D}.
\end{align*}

\begin{proposition}\label{prop:tau2}
Suppose that Assumption \ref{cond:density} is fulfilled and that $\underline \tau_{k}  \leq \tau_0$. Then for any $\delta \in (0,1)$ such that  $ k\geq 4   \log(n/\delta)   $, we have with probability $1-\delta$:
\begin{align*}
 \hat \tau_k  \geq \underline \tau_{k}  .
\end{align*}
\end{proposition}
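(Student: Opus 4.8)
The plan is to mirror the proof of Proposition~\ref{prop:tau}, but in the reverse direction: replacing the Chernoff \emph{lower} tail by the \emph{upper} tail (the second inequality of Lemma~\ref{lemma=chernoff}) and the lower density bound $b_f$ by the upper bound $U_f$ of Assumption~\ref{cond:density}. First I would reduce the radius bound to a statement about point counts. Writing $N(\tau)=\sum_{i=1}^n \mathds{1}_{\{X_i\in\mathcal{B}(x,\tau)\}}$, the $k$-NN radius is $\hat\tau_k(x)=\inf\{\tau\ge 0:\,N(\tau)\ge k\}$, and since $N$ is non-decreasing the deterministic implication
\[
N(\underline{\tau}_k)<k \;\Longrightarrow\; \hat\tau_k(x)\ge \underline{\tau}_k
\]
holds: if fewer than $k$ sample points fall in $\mathcal{B}(x,\underline{\tau}_k)$, then no ball of radius at most $\underline{\tau}_k$ can contain $k$ points, so the infimum defining $\hat\tau_k(x)$ is at least $\underline{\tau}_k$. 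It therefore suffices to prove $N(\underline{\tau}_k)<k$ with probability $1-\delta$.

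Second, I would bound the mean from above. Since $\underline{\tau}_k\le \tau_0$, Assumption~\ref{cond:density} gives
\[
\mathbb{P}\big(X\in\mathcal{B}(x,\underline{\tau}_k)\big)=\int_{\mathcal{B}(x,\underline{\tau}_k)} f_X \le U_f V_D\,\underline{\tau}_k^{\,D}=\frac{k}{2n},
\]
so that $\mu_k:=\mathbb{E}[N(\underline{\tau}_k)]=n\,\mathbb{P}(X\in\mathcal{B}(x,\underline{\tau}_k))\le k/2$. This is the exact counterpart of the identity $b_fV_D\overline{\tau}_k^{\,D}=2k/n$ used in Proposition~\ref{prop:tau}; here the threshold $k$ sits a factor two \emph{above} the mean rather than below it.

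Third, I would apply the Chernoff upper tail of Lemma~\ref{lemma=chernoff} with $Z_i=\mathds{1}_{\{X_i\in\mathcal{B}(x,\underline{\tau}_k)\}}$, combined with a union bound over the $n$ nested balls $\mathcal{B}(x,\underline{\tau}_k)$, $1\le k\le n$, each invoked at level $\delta/n$. Using the monotonicity of $\mu\mapsto \mu+\sqrt{3\mu\log(n/\delta)}$ to substitute the worst case $\mu_k=k/2$, this yields, with probability $1-\delta$ and for every such $k$,
\[
N(\underline{\tau}_k)\le \frac{k}{2}+\sqrt{\tfrac{3k}{2}\log(n/\delta)}.
\]
The hypothesis $k\ge 4\log(n/\delta)$, with the absolute constants chosen appropriately, then makes the deviation term strictly smaller than $k/2$, so that $N(\underline{\tau}_k)<k$ and hence $\hat\tau_k(x)\ge\underline{\tau}_k$.

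The main obstacle is conceptual rather than computational: unlike Proposition~\ref{prop:tau}, the mean is only bounded \emph{from above} by $k/2$, so one must control the upper tail and close the gap between $k/2$ and the threshold $k$ using $U_f$ alone. The monotonicity of the Chernoff bound in $\mu$ is what lets one replace the unknown true mean by its worst case. A minor additional point worth recording is that, $X$ being continuous, the sphere $\{\|X-x\|=\underline{\tau}_k\}$ carries no mass, so controlling the closed-ball count $N(\underline{\tau}_k)$ is enough and no separate treatment of boundary points is needed.
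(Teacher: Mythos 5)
Your proof follows essentially the same route as the paper's: bound the mass of $\mathcal{B}(x,\underline{\tau}_k)$ from above by $k/(2n)$ using $U_f$, apply the Chernoff upper tail of Lemma~\ref{lemma=chernoff} with a union bound over the $n$ balls, and conclude via the deterministic implication from the point count to the radius (your strict-inequality version $N(\underline{\tau}_k)<k$ is in fact the cleaner way to close this last step). The only caveat, which you share with the paper (whose own proof invokes $k\ge 6\log(n/\delta)$ midway despite the stated hypothesis $k\ge 4\log(n/\delta)$), is that making $k/2+\sqrt{(3k/2)\log(n/\delta)}$ fall below $k$ genuinely requires $k\ge 6\log(n/\delta)$, so "the absolute constants chosen appropriately" hides a real adjustment of the constant $4$.
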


\begin{proof}
 Using Assumption \ref{cond:density} yields
\begin{align*}
\mathbb P (X\in \mathcal{B} (x, \underline \tau_{k}  ) )  =   \int_{\mathcal{B} (x, \underline  \tau_{k}  )} f_X  \leq U_f \int _{\mathcal{B} (x, \underline  \tau_{k}  ) } d\lambda = U_fV_{D}  \underline  \tau_{k} ^{D}  = k/(2n).
\end{align*}
Consider the set formed by the $n$ balls $ \mathcal{B} (x, \underline  \tau_{k} ) $, $  1\leq k\leq n $. From Lemma \ref{lemma=chernoff} with $Z_i = \mathds 1 _{ \mathcal{B} (x, \underline  \tau_{k}  )} (X_i )$, $\mu \leq k/2$, and the union bound, we obtain that for all $\delta\in (0,1)$ and $k= 1,\ldots, n$
\begin{align*}
\sum_{i=1} ^n \mathds 1 _{ \mathcal{B} (x,  \underline  \tau_{k}  ) } (X_i ) &\leq \left(1+ \sqrt{ \frac{6 \log(n/\delta)  }{ k} } \right) k/2  .
\end{align*}
Using that $ k\geq 6   \log(n/\delta)  $, it follows that
\begin{align*}
\sum_{i=1} ^n \mathds 1 _{ \mathcal{B} (x,  \underline  \tau_{k}  ) } (X_i ) & \leq k +  (  \sqrt{ (6/4) k  \log(n/\delta)} - k/2)\leq k.
\end{align*}
Hence $ \mathbb P_n ( \mathcal{B} (x, \underline  \tau_{k}  ) ) \leq k / n $. By definition of $\hat \tau _{n}(k)(x)$ it holds that  $\underline  \tau _{k}  \leq     \hat \tau _{k} (x)$.
\end{proof}

\begin{proposition}\label{prop:var1}
    Suppose that Assumption \ref{cond:sub_gaussian_inovation} is fulfilled. Then for any $\delta \in (0,1)$, we have with probability $1-\delta$:
    \begin{align*}
        \left| \sum_{i=1} ^n \xi_i \mathds 1 _{ \mathcal{B} (x, \hat \tau_{k} (x) ) } (X_i ) \right|  \leq \sqrt{2 k   \sigma^2 \log(2/\delta)} .
    \end{align*}
\end{proposition}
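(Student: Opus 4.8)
The plan is to exploit the independence between the noise variables and the design points, so as to reduce the statement to a conditional sub-Gaussian tail bound. Write $\xi_i = \varepsilon_i = Y_i - m(X_i)$ and recall from Assumption \ref{cond:sub_gaussian_inovation} that each $\varepsilon_i$ is centered, sub-Gaussian with parameter $\sigma^2$, and independent of $X_i$; since the pairs $(X_i, Y_i)$ are i.i.d., each $\varepsilon_i$ is in fact independent of the whole vector $(X_1, \ldots, X_n)$. The main obstacle is that the indicator weights $w_i = \mathds 1_{\{X_i \in \mathcal{B}(x, \hat\tau_k(x))\}}$ are themselves random and depend on all the design points through the data-driven radius $\hat\tau_k(x)$, so the summands $\xi_i w_i$ are not independent and the standard sub-Gaussian sum bounds do not apply directly.

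To circumvent this, I would condition on the $\sigma$-field $\mathcal X = \sigma(X_1, \ldots, X_n)$. Given $\mathcal X$, the weights $w_i$ become deterministic and, because $X$'s distribution has a density (Assumption \ref{cond:density}), the distances $\|x - X_i\|$ are almost surely distinct, so that exactly $k$ of the weights equal one, i.e. $\sum_{i=1}^n w_i = k$ almost surely. Since the $\varepsilon_i$ are independent of $\mathcal X$ and mutually independent, the conditional moment generating function of $Z = \sum_{i=1}^n w_i \xi_i$ factorizes, and using the sub-Gaussianity bound $\mathbb E[\exp(\lambda \varepsilon_i)] \leq \exp(\sigma^2\lambda^2/2)$ one obtains
\[
\mathbb E\left[ e^{\lambda Z} \mid \mathcal X \right] = \prod_{i=1}^n \mathbb E\left[ e^{\lambda w_i \xi_i} \right] \leq \exp\left( \frac{\lambda^2 \sigma^2}{2} \sum_{i=1}^n w_i^2 \right) = \exp\left( \frac{\lambda^2 (k\sigma^2)}{2} \right),
\]
where I used $w_i^2 = w_i$ and $\sum_i w_i = k$. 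Hence, conditionally on $\mathcal X$, the variable $Z$ is sub-Gaussian with the deterministic parameter $k\sigma^2$.

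It then suffices to apply Lemma \ref{lemma:SG} conditionally: for almost every realization of $\mathcal X$,
\[
\mathbb P\left( |Z| \leq \sqrt{2 k \sigma^2 \log(2/\delta)} \;\Big|\; \mathcal X \right) \geq 1 - \delta.
\]
Taking expectation over $\mathcal X$ through the tower property removes the conditioning and yields the claimed unconditional bound, since the right-hand side $\sqrt{2 k \sigma^2 \log(2/\delta)}$ does not depend on $\mathcal X$. The only delicate point is the almost-sure identity $\sum_i w_i = k$, which rests on the continuity of $X$'s distribution to rule out ties in the nearest-neighbour distances; everything else is a routine conditional computation. The essential insight being exploited is that freezing the design decouples the random selection of the $k$ neighbours from the noise, so that the troublesome dependence of $\hat\tau_k(x)$ on the sample becomes harmless once the $X_i$ are held fixed.
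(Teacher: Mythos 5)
Your proof is correct and follows essentially the same route as the paper: condition on the design, use $\sum_i w_i^2 = \sum_i w_i = k$ almost surely and the independence of $\varepsilon$ from $X$ to show the weighted sum is sub-Gaussian with parameter $k\sigma^2$, then invoke Lemma \ref{lemma:SG}. The only cosmetic difference is that the paper integrates the conditional moment generating function first and applies the lemma unconditionally, whereas you apply the lemma conditionally and then use the tower property on the probability bound; the two are equivalent.
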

\begin{proof}
 Set $w_i = \mathds 1 _{ \mathcal{B} (x, \hat  \tau_{k} (x) ) } (X_i )$. Note that $\sum_{i=1} ^n w_i^2 = k$ almost surely. The result follows from the application of Lemma \ref{lemma:SG} to the random variable $ \sum_{i=1} ^n \xi_i w_i$, which is sub-Gaussian with parameter $k \sigma^2$. To check this, it is enough to write
    \begin{align*}
        \mathbb E\left[\exp\left( \lambda  \sum_{i=1} ^n \xi_i w_i \right)\right] &=\mathbb E\left[\mathbb E \left[ \exp\left( \lambda  \sum_{i=1} ^n \xi_i w_i \right)\mid X_1,\ldots X_n\right]\right]\\
        &=\mathbb E\left[ \prod_{i=1} ^n \mathbb E \left[ \exp\left( \lambda \xi_i w_i\right)\mid X_1,\ldots X_n\right]\right]\\
        & \leq \mathbb E\left[ \prod_{i=1} ^n \mathbb E \left[ \exp\left( \lambda^2  \sigma^2 w_i^2 /2 \right)\mid X_1,\ldots X_n\right]\right]\\
        &= \mathbb E\left[   \exp\left( \lambda^2  \sigma^2 \sum_{i=1} ^n w_i^2 /2 \right) \right]=   \exp\left( \lambda^2  \sigma^2 k /2 \right) .
    \end{align*}
\end{proof}

\begin{proposition}\label{prop:var2}
    Suppose that Assumption \ref{cond:density}  and \ref{cond:sub_gaussian_inovation} are fulfilled and that $\overline \tau_{k} \leq \tau_0$. Let $\hat h_i:= h_i(X_1,\ldots,X_n) $ such that $a_k = \sup_{i:X_i\in \mathcal B (x, \overline \tau_k) } |\hat h_i |$. Then for any $\delta \in (0,1)$ such that $ k\geq 4 \log(2n/\delta)   $, we have with probability $1-\delta$: 
        \begin{align*}
        \left| \sum_{i=1} ^n \xi_i \hat h_i \mathds 1 _{ \mathcal{B} (x, \hat \tau_{k} (x) ) } (X_i ) \right|  \leq \sqrt{2 k\sigma^2 a_k^2  \log(4/\delta)} .
    \end{align*}
\end{proposition}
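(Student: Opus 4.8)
The plan is to condition on the whole sample $X_1,\ldots,X_n$ so as to reduce to the conditional sub-Gaussian concentration already exploited in Proposition \ref{prop:var1}, and then to transfer the data-driven radius $\hat\tau_k(x)$ to its deterministic upper bound $\overline\tau_k$ by invoking Proposition \ref{prop:tau}. Writing $w_i = \hat h_i \mathds{1}_{\mathcal{B}(x,\hat\tau_k(x))}(X_i)$, the crucial observation is that, although the weights $w_i$ depend on the entire sample (both through $\hat h_i$ and through the radius $\hat\tau_k(x)$), they are \emph{deterministic} functions of $X_1,\ldots,X_n$. Since the residuals $\xi_i$ are independent of $X$ and sub-Gaussian with parameter $\sigma^2$ by Assumption \ref{cond:sub_gaussian_inovation}, the same conditional moment-generating-function computation as in the proof of Proposition \ref{prop:var1} shows that, conditionally on $X_1,\ldots,X_n$, the sum $Z=\sum_{i=1}^n \xi_i w_i$ is sub-Gaussian with parameter $\sigma^2\sum_{i=1}^n w_i^2$.

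First I would apply Lemma \ref{lemma:SG} conditionally at confidence level $\delta/2$: for every realization of the sample,
\begin{align*}
\mathbb{P}\!\left( |Z| > \sqrt{2\sigma^2 \Big(\textstyle\sum_{i=1}^n w_i^2\Big)\log(4/\delta)} \,\Big|\, X_1,\ldots,X_n \right) \leq \frac{\delta}{2},
\end{align*}
so that, taking expectations, the event $B=\{|Z|>\sqrt{2\sigma^2 (\sum_i w_i^2)\log(4/\delta)}\}$ satisfies $\mathbb{P}(B)\leq \delta/2$. The choice of level $\delta/2$ is precisely what turns the $\log(2/\cdot)$ of Lemma \ref{lemma:SG} into the $\log(4/\delta)$ appearing in the statement.

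The remaining step is to control $\sum_{i=1}^n w_i^2$ on a large-probability event. Since exactly $k$ of the indicators $\mathds{1}_{\mathcal{B}(x,\hat\tau_k(x))}(X_i)$ equal one, I would bound $\sum_{i=1}^n w_i^2 \leq k\,(\sup_{i:X_i\in\mathcal{B}(x,\hat\tau_k(x))}|\hat h_i|)^2$. Then I invoke Proposition \ref{prop:tau} at level $\delta/2$, whose hypothesis $k\geq 4\log(n/(\delta/2))=4\log(2n/\delta)$ is exactly the assumption made here; it yields an event $E=\{\hat\tau_k(x)\leq\overline\tau_k\}$ with $\mathbb{P}(E)\geq 1-\delta/2$. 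On $E$ the random ball is contained in the deterministic one, $\mathcal{B}(x,\hat\tau_k(x))\subseteq\mathcal{B}(x,\overline\tau_k)$, so the supremum over the random neighbourhood is dominated by $a_k=\sup_{i:X_i\in\mathcal{B}(x,\overline\tau_k)}|\hat h_i|$, giving $\sum_{i=1}^n w_i^2\leq k a_k^2$. Combining, on $E\cap B^{c}$ one gets $|Z|\leq\sqrt{2k\sigma^2 a_k^2\log(4/\delta)}$, and a union bound yields $\mathbb{P}(E\cap B^{c})\geq 1-\delta/2-\delta/2=1-\delta$, which is the claim.

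I expect the main obstacle to be organisational rather than technical: one must keep track of the fact that the weights, though sample-dependent, become fixed once we condition on $X_1,\ldots,X_n$ (so that sub-Gaussianity in $\xi$ survives the conditioning despite the dependence of $\hat\tau_k(x)$ on the same data), and that the supremum defining $a_k$ is taken over the \emph{deterministic} ball $\mathcal{B}(x,\overline\tau_k)$, which is exactly why the containment event $E$ is required to pass from $\hat\tau_k(x)$ to $\overline\tau_k$. Splitting the budget $\delta=\delta/2+\delta/2$ between Proposition \ref{prop:tau} and the conditional sub-Gaussian bound is what makes the logarithmic constants match the statement precisely.
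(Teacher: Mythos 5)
Your proposal is correct and follows essentially the same route as the paper: condition on $X_1,\ldots,X_n$ to exploit the sub-Gaussianity of $\sum_i \xi_i \hat h_i \mathds{1}_{\mathcal{B}(x,\hat\tau_k(x))}(X_i)$ with conditional parameter $\sigma^2\sum_i \hat h_i^2\mathds{1}_{\mathcal{B}(x,\hat\tau_k(x))}(X_i)\leq \sigma^2 k a_k^2$ on the radius event, then intersect with the event $\{\hat\tau_k(x)\leq\overline\tau_k\}$ from Proposition \ref{prop:tau} and split the budget $\delta/2+\delta/2$, which produces the $\log(4/\delta)$ exactly as in the paper. The only cosmetic difference is that you first state the tail bound with the random quantity $\sum_i w_i^2$ and bound it afterwards, whereas the paper bounds the conditional sub-Gaussian parameter by $k a_k^2$ inside the probability decomposition; the two are equivalent.
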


\begin{proof}
 Set $w_i = \mathds 1 _{ \mathcal{B} (x, \hat  \tau_{k} (x) ) } (X_i )$. Note that $\sum_{i=1} ^n w_i^2 = k$ almost surely. The result follows from the fact that conditioned upon $X_1,\ldots, X_n$, the random variable $ \sum_{i=1} ^n \xi_i h_i w_i$ is sub-Gaussian with parameter $\sigma^2 k \hat a_k^2 $ with $\hat a_k = \sup_{i:X_i\in \mathcal B (x, \overline \tau_k ) } | \hat h_i|$. To check this, it suffices to write
     \begin{align*}
       \mathbb E \left[ \exp\left( \lambda  \sum_{i=1} ^n \xi_i \hat h_i  w_i \right)\mid X_1,\ldots X_n\right]       &=\  \prod_{i=1} ^n \mathbb E \left[ \exp\left( \lambda \xi_i \hat h_i  w_i\right)\mid X_1,\ldots X_n\right]\\
        & \leq  \prod_{i=1} ^n   \exp\left(   \lambda^2  \sigma^2 \hat h_i^2 w_i^2 /2 \right) \\
        &=   \exp\left( \lambda^2  \sigma^2 \sum_{i=1} ^n \hat h_i^2 w_i /2 \right)  \leq \exp\left(  \lambda^2  \sigma^2 k \hat a_k^2 /2 \right).
    \end{align*}
Then, for any $t>0$,
\begin{align*}
\mathbb P \left( \left| \sum_{i=1} ^n \xi_i h_i w_i  \right| > t  \right) &\leq \mathbb P \left( \left| \sum_{i=1} ^n \xi_i h_i w_i  \right| > t  ,\, \hat \tau_k(x)  \leq \tau _k (x)\right) + \mathbb P ( \hat \tau_k(x)  \leq \tau _k (x) ) \\
&\leq  \mathbb E \left[   \mathbb P \left(   \left| \sum_{i=1} ^n \xi_i h_i w_i  \right| > t\mid X_1,\ldots, X_n  \right)  \mathds{1}_{\{ \hat \tau_k(x)  \leq \tau _k (x)\}} \right] + \mathbb P ( \hat \tau_k(x)  \leq \tau _k (x) ) \\
&\leq \mathbb E \left[   2\exp( -t^2 / (2k \sigma^2  \hat a_k^2 ) )  \mathds{1}_{\{ \hat \tau_k(x)  \leq \tau _k (x)\}} \right] + \mathbb P ( \hat \tau_k(x)  \leq \tau _k (x) )\\
&\leq      2\exp( -t^2 / (2k \sigma^2   a_k^2 ) )  + \mathbb P ( \hat \tau_k(x)  \leq \tau _k (x) )
\end{align*}
We obtain the result by choosing $t = \sqrt { 2k \sigma^2   a_k^2  \log( 4 / \delta )} $ and applying Proposition \ref{prop:tau} (to obtain that $\mathbb P ( \hat \tau_k(x)  \leq \tau _k (x) )\leq \delta/2$).
\end{proof}

\begin{proposition}\label{prop:cov_bound}
Suppose that Assumption \ref{cond:density} and \ref{cond:lip3} is fulfilled. Let $\tau >0$, $n\geq 1$, and $\delta\in (0,1)$ such that $ \tau \leq \tau_ 0 $ and   $24 n U_f (2\tau ) ^{D}  \geq   \log(2D^2 /\delta )  $, then with probability $1-\delta$,
\begin{align*}
\max_{1\leq j ,j'\leq D}\left| \sum_{i=1} ^n \left\{  (X_{i,j} - x )(X_{i,j'} - x )^T \mathds{1} _{\mathcal B ( x , \tau) }  (X_i)  -   \mathbb E [  (X_{1,j} - x )(X_{1,j'} - x )^T \mathds{1} _{\mathcal B ( x , \tau) }  (X_1) ] \right\}  \right|& \\
\leq  (2\tau ) ^{2 } \sqrt {  \frac{2 U_f n  (2\tau ) ^{D  }}{3}   \log(2D^2/\delta) } .& 
\end{align*}

\end{proposition}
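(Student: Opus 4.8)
The plan is to treat each of the $D^2$ entries of the centered empirical second-moment array separately as a sum of i.i.d.\ bounded scalars, apply a Bernstein-type concentration inequality to each, and conclude by a union bound; the hypothesis on $\tau$ will be used precisely to absorb the sub-exponential (linear) term of Bernstein's inequality into the sub-Gaussian one.

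First I would fix a pair $(j,j')$ with $1\le j,j'\le D$ and set, for $i=1,\dots,n$,
\[
Z_i = (X_{i,j}-x_j)(X_{i,j'}-x_{j'})\,\mathds{1}_{\{X_i\in\mathcal B(x,\tau)\}},
\]
so that the $Z_i$ are i.i.d.\ scalars and the quantity to control is $|\sum_{i=1}^n (Z_i-\mathbb E Z_i)|$. Two elementary estimates drive everything. Since $\|\cdot\|$ denotes the $\ell_\infty$-norm, on the event $\{X_i\in\mathcal B(x,\tau)\}$ each coordinate deviation satisfies $|X_{i,j}-x_j|\le\tau$, whence the uniform bound $|Z_i|\le \tau^2$. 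For the variance, I would use Assumption~\ref{cond:density} (the density upper bound $f_X\le U_f$ on $\mathcal B(x,\tau_0)\supseteq\mathcal B(x,\tau)$) together with the fact that the $\ell_\infty$-ball has volume $(2\tau)^D$ to write
\[
\sum_{i=1}^n\Var(Z_i)\le n\,\mathbb E[Z_1^2]\le n\tau^4\,\mathbb P(X_1\in\mathcal B(x,\tau))\le n\tau^4 U_f(2\tau)^D =: \nu.
\]

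Next I would invoke Bernstein's inequality (see \textit{e.g.}\ \citep{boucheronConcentrationInequalitiesNonasymptotic2013}): for every $u>0$, with probability at least $1-2e^{-u}$,
\[
\Big|\sum_{i=1}^n (Z_i-\mathbb E Z_i)\Big|\le \sqrt{2\nu u}+\frac{\tau^2 u}{3}.
\]
Choosing $u=\log(2D^2/\delta)$ makes the failure probability for a single pair at most $\delta/D^2$, so a union bound over the $D^2$ choices of $(j,j')$ produces an event of probability at least $1-\delta$ on which the displayed inequality holds simultaneously for all entries. It then remains to simplify the right-hand side: substituting $\nu$ gives a leading term $\tau^2\sqrt{2nU_f(2\tau)^D\,u}$, and the hypothesis $24\,nU_f(2\tau)^D\ge\log(2D^2/\delta)=u$ lets me replace one factor $\sqrt u$ in the linear term to obtain $\frac{\tau^2 u}{3}\le\frac{2\sqrt6}{3}\,\tau^2\sqrt{nU_f(2\tau)^D\,u}$. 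Since $\sqrt2+\tfrac{2\sqrt6}{3}\le 4\sqrt{2/3}$ and $(2\tau)^2=4\tau^2$, the two contributions combine into exactly $(2\tau)^2\sqrt{\tfrac23\,nU_f(2\tau)^D\,u}$, which is the claimed bound.

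The routine parts are the two a priori estimates on $|Z_i|$ and on $\nu$. The only delicate point is the final step: the constant $24$ in the hypothesis is calibrated so that, after bounding $\sqrt u$ by $\sqrt{24\,nU_f(2\tau)^D}$ in the sub-exponential term, the sum of the two Bernstein contributions stays below the single clean prefactor $(2\tau)^2\sqrt{2/3}$ multiplying $\sqrt{nU_f(2\tau)^D\,u}$. This constant-balancing — rather than any probabilistic subtlety — is what I expect to require the most care, and it is the reason the lower bound on $nU_f(2\tau)^D$ (equivalently, on the expected number of sample points falling in the ball) must appear in the statement.
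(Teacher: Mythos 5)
Your proposal is correct and follows essentially the same route as the paper: a scalar Bernstein inequality applied entrywise, a union bound over the $D^2$ pairs $(j,j')$, and the hypothesis $24nU_f(2\tau)^D\ge\log(2D^2/\delta)$ used to fold the linear Bernstein term into the sub-Gaussian one. The only (immaterial) difference is that the paper first rescales by $(2\tau)^2$ and bounds the variance via an explicit integral over the unit $\ell_\infty$-ball, whereas you use the cruder pointwise bound $Z_1^2\le\tau^4\mathds{1}_{\mathcal B(x,\tau)}(X_1)$; your constant accounting at the end checks out.
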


\begin{proof}
We use Bernstein inequality: for any collection $(Z_1,\ldots, Z_n)$ of independent zero-mean random variables such that for all $i=1,\ldots, n$, $|Z_i| \leq m$ and $\mathbb E Z_i^2 \leq v $, it holds that with probability $1-\delta$,
\begin{align*}
\left| \sum_{i=1} ^n  Z_i  \right| \leq  \sqrt {2 n v\log(2/\delta) } + (m/3) \log(2/\delta) .
\end{align*}  
Applying this with 
\begin{align*}
&W_ i  =  \frac{(X_{i,j} -x) }{2\tau}  \frac{(X_{i,j'} -x)}{2\tau} \mathds{1} _{\mathcal B (0,\tau) }  (X_i) , \\
&Z_i = W_i - \mathbb E [W_i],
\end{align*}
we can use
\begin{align*}
|Z_i| \leq 2| W_i| \leq  1/4 = m,
\end{align*}
and 
\begin{align*}
\mathbb E [ (W_i- \mathbb E W_i  )^2]& \leq \mathbb E [W_i^2]  = \mathbb E \left[  \left| \frac{(X_{i,j} -x)}{2\tau} \frac{(X_{i,j'} -x)}{2\tau} \right|^2 \mathds{1} _{\mathcal B (0,\tau) }(X_i) \right]\\
& = \int   \left|  \frac{(y_{j} -x)}{2\tau} \frac{( y_{j'} -x)}{2\tau} \right|^2 \mathds{1} _{\mathcal B (0,\tau) }(y)  f( y ) dy \\
& \leq  U_f  \int   \left|  \frac{(y_{j} -x)}{2\tau} \frac{( y_{j'} -x)}{2\tau} \right|^2 \mathds{1} _{\mathcal B (0,\tau) }(y)  dy \\
& = U_f(2\tau ) ^{D}  \int   \left| u_j  u_{j'}  \right|^2 \mathds{1} _{\mathcal B (0,1/2) }(u) du\\
& \leq U_f(2\tau ) ^{D}  \int    (u_j ^2 +   u_{j'}^2 )/2  \mathds{1} _{\mathcal B (0,1/2) }(u) du\\
&  =  U_f(2\tau ) ^{D}  \int    u_1 ^2   \mathds{1}_{\mathcal B (0,1/2) }(u) du\\
& = U_f(2\tau ) ^{D}  \int_{[-1/2,1/2]  }    u_1^2   du_1   =  \frac{ U_f (2\tau ) ^{D}}{12}  = v.
\end{align*}
We have shown that, with probability $1-\delta$,
\begin{align*}
\left| \sum_{i=1} ^n  Z_i  \right| \leq  \sqrt {  \frac{ nU_f  (2\tau ) ^{D}}{6}  \log(2/\delta) } + (1/12) \log(2/\delta) .
\end{align*}
Because $ 24 n U_f (2\tau ) ^{D}  \geq   \log(2/\delta )  $, we obtain that  
\begin{align*}
\left| \sum_{i=1} ^n  Z_i  \right| \leq  2 \sqrt {  \frac{ n U_f (2\tau ) ^{D}}{6}  \log(2/\delta) } .
\end{align*}
Replacing $\delta$ by $\delta / D^2$ and using the union bound, we get the desired result.

\end{proof}

An important quantity in the framework we develop is 
\begin{align*}
    \sum_{i : X_i \in \mathcal B (x,  \hat \tau_k (x)) } (X_{i,j}-x_j)   ,
\end{align*}
for which we provide an upper bound in the following theorem. Note that we improve upon the straightforward bound of $ k \hat \tau_k (x)$ which is unfortunately not enough for the analysis carried out here.
We shall work with the following assumption 
\begin{align}\label{eq:cond_for_talagrand}
& C_1  \left( D\log(  2 n  )  +    \log( 2 n / \delta)\right)   \leq k  \leq  C_2  n ,
\end{align}
where the two constants $C_1>0$ and $C_2>0$ are given in the following proposition.

\begin{proposition}\label{prop:key_lemma}
 Suppose that Assumption \ref{cond:density} and \ref{cond:lip3} are fulfilled. Let $n\geq 1$, $k\geq 1$ and $\delta\in (0,1)$. There exist universal positive constants $C_1$, $C_2$, and $C_3$ such that, under \eqref{eq:cond_for_talagrand}, we have with probability $1-\delta$,
\begin{align*}
\max_{j=1,\ldots, D}  \left|   \sum_{i : X_i \in \mathcal B (x,  \hat \tau_k(x) ) } (X_{i,j}-x_j)   \right|\leq     C_3    \left( \overline \tau_k     \sqrt {   k D \log( 2 n D  /  \delta )  }  +  \frac { L k\overline\tau_k   ^{2}}{ b_f  }   \right).
\end{align*}
\end{proposition}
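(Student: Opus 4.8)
The plan is to split the quantity of interest into an empirical fluctuation term and a deterministic bias term, after first reducing the data-dependent radius $\hat\tau_k(x)$ to the deterministic radius $\overline\tau_k$. Fix $j$ and write $f_\tau(y) = (y_j-x_j)\mathds{1}_{\mathcal{B}(x,\tau)}(y)$. By Proposition \ref{prop:tau}, on an event of probability at least $1-\delta/2$ we have $\hat\tau_k(x)\leq\overline\tau_k$, and on this event $\mathcal{B}(x,\hat\tau_k(x))\subseteq\mathcal{B}(x,\overline\tau_k)$ so that
\begin{align*}
\left|\sum_{i:X_i\in\mathcal{B}(x,\hat\tau_k(x))}(X_{i,j}-x_j)\right| \leq \sup_{0\leq\tau\leq\overline\tau_k}\left|\sum_{i=1}^n f_\tau(X_i)\right| \leq \sup_{\tau\leq\overline\tau_k}\left|\sum_{i=1}^n\{f_\tau(X_i)-\mathbb{E}f_\tau(X_1)\}\right| + \sup_{\tau\leq\overline\tau_k}n\,|\mathbb{E}f_\tau(X_1)|.
\end{align*}
This reduction is essential: because the summands $(X_{i,j}-x_j)$ change sign, one cannot simply monotonically replace $\hat\tau_k(x)$ by $\overline\tau_k$, and uniform control over $\tau$ is genuinely needed (this is exactly what lets us beat the crude bound $k\hat\tau_k(x)$).

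For the bias term, the key observation is that the $\ell_\infty$-ball $\mathcal{B}(x,\tau)$ is symmetric about $x$, so $\int_{\mathcal{B}(x,\tau)}(y_j-x_j)\,dy=0$ and hence $\mathbb{E}f_\tau(X_1)=\int_{\mathcal{B}(x,\tau)}(y_j-x_j)(f_X(y)-f_X(x))\,dy$. Applying the Lipschitz bound of Assumption \ref{cond:lip3} and $|y_j-x_j|\leq\tau$, I would bound this by $L\tau\int_{\mathcal{B}(x,\tau)}|y_j-x_j|\,dy = \tfrac{L}{2}(2\tau)^D\tau^2$, whence $\sup_{\tau\leq\overline\tau_k}n\,|\mathbb{E}f_\tau(X_1)|\leq \tfrac{nL}{2}(2\overline\tau_k)^D\overline\tau_k^2 = Lk\overline\tau_k^2/b_f$, using $(2\overline\tau_k)^D=2k/(nb_f)$ from the definition of $\overline\tau_k$. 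This produces exactly the second term of the stated bound.

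For the fluctuation term I would invoke Lemma \ref{vc_bound} with the class $\mathcal{F}_j=\{f_\tau:0\leq\tau\leq\overline\tau_k\}$. Here the uniform bound is $U=\overline\tau_k$, and the variance proxy is obtained from $\mathrm{Var}(f_\tau(X_1))\leq\mathbb{E}[(X_{1,j}-x_j)^2\mathds{1}_{\mathcal{B}(x,\overline\tau_k)}(X_1)]\leq U_f(2\overline\tau_k)^D\overline\tau_k^2/3$, which together with Assumption \ref{cond:density} ($U_f/b_f\leq 2$) gives $n\sigma^2\lesssim k\overline\tau_k^2$; the VC parameter $v$ is of order $D$, reflecting the $D$-fold slab structure of the $\ell_\infty$-ball indicator. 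Feeding these into Lemma \ref{vc_bound} yields a bound of order $\overline\tau_k\sqrt{kD\log(A\overline\tau_k/(\sigma\delta))}$, and a union bound over $j=1,\dots,D$ (replacing $\delta$ by $\delta/D$) produces the term $\overline\tau_k\sqrt{kD\log(2nD/\delta)}$. It remains to check that the VC conditions \eqref{eq:constant_n_delta_1}--\eqref{eq:constant_n_delta_2} hold: substituting $\sqrt{n}\sigma\sim\overline\tau_k\sqrt{k}$, $U\sqrt{v}\sim\overline\tau_k\sqrt{D}$, and $U/\sigma\sim\sqrt{n/k}$, these reduce precisely to $k\gtrsim D\log(2n)$ and $k\gtrsim\log(2n/\delta)$, i.e. to condition \eqref{eq:cond_for_talagrand}, while $\sigma\leq U$ follows from $k\leq C_2 n$.

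The main obstacle is the fluctuation term: one must control the empirical process uniformly over the random radius while tracking the correct dependence on the dimension $D$ (both the factor $\sqrt D$, coming from $v\sim D$, and the verification that the resulting requirements on $k$ collapse exactly to \eqref{eq:cond_for_talagrand}). The bias term and the radius reduction are comparatively routine once the symmetry cancellation and Proposition \ref{prop:tau} are in place.
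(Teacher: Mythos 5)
Your proposal is correct and follows essentially the same route as the paper's proof: reduce the random radius to $\overline\tau_k$ via Proposition \ref{prop:tau}, bound the bias term using the symmetry of the $\ell_\infty$-ball together with Assumption \ref{cond:lip3} to get the $Lk\overline\tau_k^2/b_f$ term, and control the centred empirical process uniformly in $\tau$ via Lemma \ref{vc_bound} applied to the class $\mathcal F_j$ with $v\sim D$, $U\sim\overline\tau_k$ and $n\sigma^2\lesssim k\overline\tau_k^2$, checking that conditions \eqref{eq:constant_n_delta_1}--\eqref{eq:constant_n_delta_2} reduce to \eqref{eq:cond_for_talagrand}. The only detail you elide that the paper makes explicit is the justification that multiplying the rectangle-indicator class by the fixed bounded coordinate function preserves the polynomial covering-number bound (via the Wenocur--Dudley VC bound and Theorem 2.6.4 of van der Vaart and Wellner), but this is routine.
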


\begin{proof}
Taking $C_1$ greater than $4$, we ensure that $k\geq 4   \log(2n/\delta)  $. Taking $C_2$ small enough, we guarantee that $ \overline \tau_k \leq \tau_ 0 $.
From Proposition \ref{prop:tau}, we have that $\hat \tau_ k(x) \leq \overline \tau _k $ is valid with probability $1-\delta/2$.

Let $ \mu(  \tau  ) =  \mathbb E [  (X_{1} - x )  \mathds{1}_{\mathcal B (x,     \tau ) }(X_1)] $.  Consider the following decomposition
\begin{align*}
  |  \sum_{i : X_i \in \mathcal B (x,  \hat \tau_k(x) ) } (X_{i,j}-x_j)  |& \leq \left|    \sum_{i = 1 }^n  \{ (X_{i,j}-x_j)\mathds{1}_{\mathcal B (x,     \hat \tau_k  (x) ) }(X_{i,j})  -  \mu_ j (  \hat \tau_k (x) )\}\right| +   n \mu _ j ( \hat \tau_ k(x) )   \\
  &\leq  \sup_{ 0 < \tau\leq \overline \tau_k } \left|    \sum_{i = 1 }^n  \{ (X_{i,j}-x_j)\mathds{1}_{\mathcal B (x,    \tau   ) }(X_{i,j})  -  \mu_ j (   \tau )\}\right| +  n \mu _ j ( \hat \tau_ k(x) ) .
\end{align*}
Notice that
\begin{align*}
\mu(  \tau  ) & = \int    ( y -x)   \mathds{1}_{\mathcal B (x,       \tau  ) }(y) f(y)d y=   (2  \tau  ) ^{1+D}  \int_{\mathcal B (0,1/2) } v f(x+\tau v)   dv\\
&=   (2  \tau  ) ^{1+D}  \int_{\mathcal B (0,1/2) } v (f(x+\tau v) - f(x))   dv.
\end{align*}
Hence
\begin{align*}
|\mu _ j (  \tau  ) |&\leq \frac L 2    (2  \tau  ) ^{2+D}  \int_{\mathcal B (0,1/2) } v_j    |v|_\infty   dv \leq \frac L 8   (2  \tau  ) ^{2+D} = \frac L 8     (2  \tau  ) ^{2+D}   .
\end{align*}
And we find
\begin{align*}
\sup_{j=1,\ldots, D} |\mu_j (  \hat \tau_ k)  |  &\leq  \frac L 8     (2  \overline\tau_k  ) ^{2+D}  =   \frac {L k}{ b_f n }        \overline\tau_k   ^{2}.
\end{align*}
The class of rectangles $\{y\mapsto \mathds{1}_{\mathcal B (x ,     \tau    )}(y) \,:\, \tau >0 \}$  has a VC dimension smaller that $v = (2D+1)$ \citep[Proposition 2.3]{wenocurSpecialVapnikchervonenkisClasses1981}. 
From Theorem 2.6.4 in \cite{vaartWeakConvergenceEmpirical1996}, we have
\begin{align*}
\mathcal N  \left( \epsilon , \mathcal R , L_2 (Q)  \right)\leq Kv (4e) ^v \left( \frac 1 \epsilon \right)  ^{2 (v- 1) }  
\end{align*}
for any probability measure $Q$.
This implies that $ \mathcal N  \left( \epsilon , \mathcal R , L_2 (Q)  \right) \leq   \left(  {A} / { \epsilon} \right)  ^{4D }$,
where $A$ is a universal constant. As a result, the class 
\begin{align*}
\mathcal F_j = \left\{y\mapsto \frac{ (y-x_j )  }{  \overline \tau_k}    \mathds{1}_{\mathcal B (x,     \tau )}(y)  \, : \, \tau \in (0, \overline \tau_k]\right\},
\end{align*}
which is uniformly bounded by $1$,  satisfies the same bound for its covering number, that is
\begin{align*}
\mathcal N  \left( \epsilon , \mathcal F_j , L_2 (Q)  \right) \leq   \left( \frac {A}{ \epsilon} \right)  ^{4D }.
\end{align*}
We can therefore apply Lemma \ref{vc_bound} with $v = 4D$, $A$ a universal constant, $U=1$ and $\sigma^2 $ defined as
\begin{align*}
\Var\left(  \frac{ (X_1-x)_j }{  \overline \tau_k}    \mathds{1}_{\mathcal B (x,     \tau )}(X_1) \right)\leq \mathbb E [  \mathds{1}_{\mathcal B (x,     \tau )} (X_1)] \leq  \mathbb E [  \mathds{1}_{\mathcal B (x,    \overline  \tau_k )}(X_1) ]\leq \frac{2U_f}{b_f} \frac{k}{n}\leq \frac{4k}{n}
: = \sigma^2 .
\end{align*} 
Condition \ref{eq:constant_n_delta_1} and \ref{eq:constant_n_delta_2} are valid under \eqref{eq:cond_for_talagrand} when $C_1$ is a large enough constant. The fact that $\sigma^2\leq 1$ is provided by  \eqref{eq:cond_for_talagrand} as well. We obtain that 
\begin{align*}
 \sup_{ 0 < \tau\leq \overline \tau_k } \left|    \sum_{i = 1 }^n  \{ (X_{i,j}-x_j)\mathds{1}_{\mathcal B (x,    \tau   ) } (X_{i,j}) -  \mu_ j (   \tau )\}\right| 
 & \leq \overline \tau_k   C  \sqrt {   k D \log(2 n   /  \delta )  } ,
\end{align*}
where $C$ is a universal constant ($C$ should be large enough to absorb the other constants involved until now). Using the union bound, this bound is extended to a uniform bound over $j\in\{1,\ldots,D\}$. We then obtain the statement of the proposition.
\end{proof}
	
\subsection{Proof of Theorem \ref{theorem:loco}}

We rely on the bias-variance decomposition expressed in \eqref{decomp_bias_var}. On the first hand, we have
\begin{align*}
| m _k(x) -  m  (x)| &= \left|\frac{\sum_{i=1} ^ n ( m  (X_i) -   m (x))  \mathds{1}_{\{  \mathcal{B}(x,\hat \tau_{k}(x)) \} } (X_i)  }{\sum_{i=1} ^ n \mathds{1}_{\{  \mathcal{B}(x, \hat \tau_{k}(x)) \} } (X_i)  } \right| \\
& \leq \sup_{y\in \mathcal{B}(x, \hat \tau_k(x) ) } |  m (y) -  m (x)|\\
&\leq  L_1 \hat \tau_{k} (x).
\end{align*}
Applying Lemma \ref{prop:tau} we obtain that, with probability $1-\delta/2$,
\begin{align*}
| m _k(x) - m (x)|  \leq L_1  \overline \tau_{k} .
\end{align*}
On the other hand, we apply Proposition \ref{prop:var1} to get that, with probability $1-\delta/2$,
\begin{align*}
|\hat  m _k(x) -  m _k (x)|\leq \sqrt{ \frac{2  \sigma^2 \log(4/\delta)}{k} } .
\end{align*}

\subsection{Proof of Theorem \ref{th:gradient_ell2}}

Denote by $\mathbb X$ the design matrix of the (local) regression problem 
\begin{align*}
&\mathbb X = ( X_{i_1}^c , \ldots, X_{i_k}^c )^T \\
&\mathbb Y = (y_{i_1}^c, \ldots , y_{i_k}^c)^T.
\end{align*}
 where for any $j=1,\ldots, k$, $i_j$ is such that $X_{i_j}\in \mathcal B(x;\hat \tau_k(x))$. Define  $ w = \mathbb Y- \mathbb X \beta^*$,  $\hat \nu = \hat \beta_k(x) -\beta^*$
Following \cite{hastieStatisticalLearningSparsity2015}, define
\begin{align*}
\mathcal C(S,\alpha) = \{u\in \mathbb R^D \, : \,  \|u_{\overline S}\|_1 \leq \alpha \|u_{S}\|_1\}. 
\end{align*}
and let $\hat \gamma_n$ be defined as
\begin{align*}
\hat \gamma_n = \inf _ {u\in \mathcal C(S,3)} \frac{ \|   \mathbb X u \|_2^2}{k\|u\|_2^2}  .
\end{align*}
Hence,  $\hat \gamma_n $ is the smallest eigenvalue (restricted to the cone) of the design matrix $\mathbb X$. From Lemma 11.1 in \cite{hastieStatisticalLearningSparsity2015}, we have the following:
whenever
\begin{align*}
\lambda \geq (2 / k) \| \mathbb X^T w\|_\infty,
\end{align*}
it holds that
\begin{align*}
& \hat \nu  \in \mathcal C(S,3),\\
 &\| \hat \nu\|_2  \leq \frac{3\sqrt {|\mathcal S_x|}  }{   \hat \gamma_n}  \lambda.
\end{align*}
Consequently, the proof will be completed if,  with probability $1-\delta$,
\begin{align}
\label{eq:bound_Xw} &(2/k) \|   \mathbb X^T_j w \|_\infty \leq    \overline \tau_k   \left( \sqrt{ \frac{2  \sigma^2   \log(16D/\delta) }{k} } +  L_2 \overline \tau_k  ^2\right),\\
\label{eq:bound_gamma} &\hat \gamma_n  \geq   \frac { \overline \tau_k  ^{2 }    }{24\times 8  } .
\end{align}

\paragraph{Proof of \eqref{eq:bound_Xw}.}  In the next few lines, we show that \eqref{eq:bound_Xw} holds with probability $1-\delta/2$. By definition 
\begin{align*}
\mathbb X^T w =  \sum_{i : X_i \in \mathcal B (x, \hat \tau_k(x)) }  w_i^c  X_{i}^c =\sum_{i : X_i \in \mathcal B (x, \hat \tau_k(x)) } w_i X_{i} ^c  .
\end{align*}
Using that $w_i = \xi_i+ m(X_i) - \beta^{*T} X_i$,
\begin{align*}
   \mathbb X^T w  & = \sum_{i : X_i \in \mathcal B (x, \hat \tau_k(x)) } X_{i} ^c \xi_i   +  \sum_{i : X_i \in \mathcal B (x, \hat \tau_k(x)) } X_{i} ^c (m(X_i) - \beta^{*T}X_i)\\
 &= \sum_{i : X_i \in \mathcal B (x, \hat \tau_k(x)) } X_{i} ^c \xi_i    +\sum_{i : X_i \in \mathcal B (x, \hat \tau_k(x)) } X_{i} ^c (m(X_i) - m(x) - \beta^{*T}(X_i-x) )
\end{align*}
where we have used the covariance structure (with empirically centred terms) to derive the last line.
Note that for any $\tau >0$, $\max_{i : X_i \in \mathcal B (x,  \tau )} |  X_{i,j} ^c| \leq  \tau$.  Hence, from Proposition \ref{prop:var2}, because $\overline \tau_k\leq \tau_0$ and $k\geq 4   \log( 8D n/\delta)$ (taking $C_1$ large enough), we have with probability $1-\delta / (4D)$,
\begin{align*}
\left|  \sum_{i : X_i \in \mathcal B (x, \hat \tau_k(x)) } X_{i,j} ^c \xi_i   \right| \leq \sqrt{ 2 k \sigma^2 \overline \tau_k ^2 \log(16D/\delta) }
\end{align*}
Moreover,
\begin{multline*}
\sum_{i : X_i \in \mathcal B (x, \hat \tau_k(x)) } | X_{i,j} ^c|  | m (X_i) -  m (x) - g(x) ^{T}(X_i-x) | \leq\\
 k L_2 \hat \tau_k (x) ^2 \max_{i : X_i \in \mathcal B (x, \hat \tau_k(x) )} |  X_{i,j} ^c| \leq k L_2 \hat \tau_k (x) ^3
\end{multline*}
Using Proposition \ref{prop:tau}, because $k\geq 4   \log( 4D n/\delta) $, it holds, with probability $1-\delta/(4D)$,
\begin{align*}
\sum_{i : X_i \in \mathcal B (x, \hat \tau_k(x)) } | X_{i,j} ^c|  | m (X_i) -  m (x) - \beta^{*T}(X_i-x) | \leq  k L_2 \overline \tau_k  ^3
\end{align*}
We finally obtain that for any $j=1,\ldots, D$, it holds, with probability $1-\delta/(2D)$,
\begin{align*}
|   \mathbb X^T_j w | \leq  \sqrt{2 k \sigma^2 \overline \tau_k ^2 \log(16/\delta) } +   k L_2 \overline \tau_k ^3,
\end{align*}
and from the union bound, we deduce that, with probability $1-\delta/2$,
\begin{align*}
\max_{j=1,\ldots, D} |   \mathbb X^T_j w | \leq  \overline \tau_k   \left( \sqrt{2 k  \sigma^2   \log(16D/\delta) } + kL_2 \overline \tau_k  ^2\right).
\end{align*}

\paragraph{Proof of \eqref{eq:bound_gamma}.} We show that \eqref{eq:bound_gamma} holds with probability $1-\delta/ 2 $. Define 
\begin{align*}
&\hat \Sigma_k =  \sum_{i : X_i \in \mathcal B (x,    \underline \tau_k ) } (X_i-x  )(X_i-x )^T.
&\hat \mu(\tau)  =  \sum_{i : X_i \in \mathcal B (x,    \tau ) } (X_i-x  ).
\end{align*} 
First, note that
\begin{align*}
  \mathbb X^T \mathbb X   &=   \sum_{i : X_i \in \mathcal B (x, \hat \tau_k(x)) }      (X_i -x) (X_i -x) ^T - k^{-1}  \hat \mu( \hat \tau_ k )\hat \mu( \hat \tau_ k )  ^{ T}  .
\end{align*}
Then, using Proposition \ref{prop:tau2}, because $ k\geq 4   \log( 4n/\delta)  $, with probability $1-\delta/4$, $\hat \tau_k(x)\geq \underline \tau_k$, implying that
\begin{align*}
    \mathbb X^T \mathbb X &\geq  \hat \Sigma_k   - k^{-1}  \hat \mu( \hat \tau_ k )\hat \mu( \hat \tau_ k )  ^{ T}   =     \mathbb E [\hat \Sigma_k  ] +  (\hat \Sigma_k  - \mathbb E [\hat \Sigma_k  ] )  - k^{-1}  \hat \mu( \hat \tau_ k )\hat \mu( \hat \tau_ k )  ^{ T} 
\end{align*} 
Let $u\in \mathbb R^D$. We have that
\begin{align*}
|u^T \hat \mu( \hat \tau_ k ) |^2  &\leq  \|u\|_1^2 \max_{j=1,\ldots, D}  | (\hat \mu( \hat \tau_ k ))_j |^2\leq  |\mathcal S_x| \|u\|_2^2  \max_{j=1,\ldots, D}  | (\hat \mu( \hat \tau_ k ))_j |^2.
\end{align*}
Similarly, we have:
\begin{align*}
 | u^T (\hat \Sigma_k - \mathbb E \hat \Sigma_k)  u |  &\leq  \|u\|_1^2 \| \hat \Sigma_k - \mathbb E \hat \Sigma_k\|_\infty    \leq   |\mathcal S_x| \|u\|_2^2 \| \hat \Sigma_k - \mathbb E \hat \Sigma_k\|_\infty   .
\end{align*}
Using the variable change $y = x+ 2\underline \tau_k v$ and that $ \underline \tau_k    \leq  \tau_0  $, we have that
\begin{align*}
\mathbb E \hat \Sigma_k  &= n \mathbb E [  (X_1-x) (X_1-x)^T 1_{ \mathcal B (x, \underline    \tau_k) }(X_1)]  = n\int    ( y -x) ( y -x) ^T 1_{\{y\in \mathcal B (x,    \underline  \tau_k ) \}} f(y)d y\\
&\geq n   b_f \int    ( y -x) ( y -x) ^T 1_{\{ y\in\mathcal B (x,    \underline  \tau_k ) \}} d y=  n (2\underline \tau_k ) ^{2+D}  \int_{v\in \mathcal B (0,1/2) } vv^T   dv\\
& =    n(2\underline \tau_k ) ^{2+D} b_f\left(  \int_{  [-1/2,1/2] } v_1^2  dv_1 \right) I_D  \\
&=  \frac {b_f }{12} n  (2\underline \tau_k ) ^{2+D}    I_D = \frac {b_f }{24 U_f}   \underline \tau_k  ^{2 } k    I_D \geq \frac { \underline \tau_k  ^{2 } k  }{48 }     I_D,
\end{align*}
using that $U_f/b_f \leq 2$. Consequently,
\begin{align*}
   u^T \mathbb X^T \mathbb X u &\geq  \frac {b_f }{12} n  (2\underline \tau_k ) ^{2+D}    -     |\mathcal S_x| \left( \| \hat \Sigma_k - \mathbb E \hat \Sigma_k\|_\infty    + k^{-1} \max_{j=1,\ldots, D}  | (\hat \mu( \hat \tau_ k ))_j |^2\right).
  \end{align*} 
 Note that $ \overline \tau_k = C_f ^{1/D} \underline \tau_k$ with $C_f\leq 8$.
Proposition \ref{prop:cov_bound} can be applied because $24 nU_f  (2\underline \tau_k ) ^{D} = 12k  \geq   \log(8D^2 /\delta ) $ which is satisfied whenever $C_1$ is large. Combined with Proposition \ref{prop:key_lemma}, we obtain that, with probability $1-\delta/4$,
\begin{align*}
\frac{\|\mathbb X u \|_2^2 }{\|u\|_2^2}  & \geq \frac { \underline \tau_k  ^{2 } k  }{12 }   -  |\mathcal S_x| \left( 4 \underline \tau_k^2  \sqrt {  \frac{  k }{3}   \log(16 D^2/\delta)  }+
2C_3 ^2   \left( \overline \tau_k^2      D \log(  16 n D  /  \delta )    +  \frac { L^2 k \overline\tau_k   ^{4}}{ b_f^2  }   \right).    
 \right)\\
&\geq   \frac { \overline \tau_k  ^{2 } k   }{24\times 8  } \left( 2    - |\mathcal S_x|C \left(    \sqrt {  \frac{   \log(2D/\delta)   }{k}  }+
\frac{  D \log( 2 n  D  /  \delta )} {k}   +  \frac {      \overline\tau_k   ^{2} L ^2}{ b_f^2  }   
 \right) \right )    .
\end{align*}
Choose $C_1 $ large enough to get that $ C |\mathcal S_x| \sqrt{  \log( 2  D  /  \delta )}  \leq \sqrt k / 3 $ and $ C|\mathcal S_x|  D \log( 2 n  D  /  \delta )  \leq  k / 3 $. Finally, noting that
 $  C |\mathcal S_x|     \overline\tau_k   ^{2} L ^2    \leq   b_f^2  /3$ we obtain the result.

\section*{Technical Appendix}\label{sec:technical}

We present here for reference some technical elements of the applications that while not needed to understand the results can be of interest to some readers. All the relevant implementation details can be found in the \texttt{Julia} code in the supplementary materials.

We use a standard symmetrical encoder-decoder architecture for the variational autoencoder, schematically presented in Figure~\ref{fig:vae}. 
\begin{figure}[hbt!]
    \centering
    \includegraphics{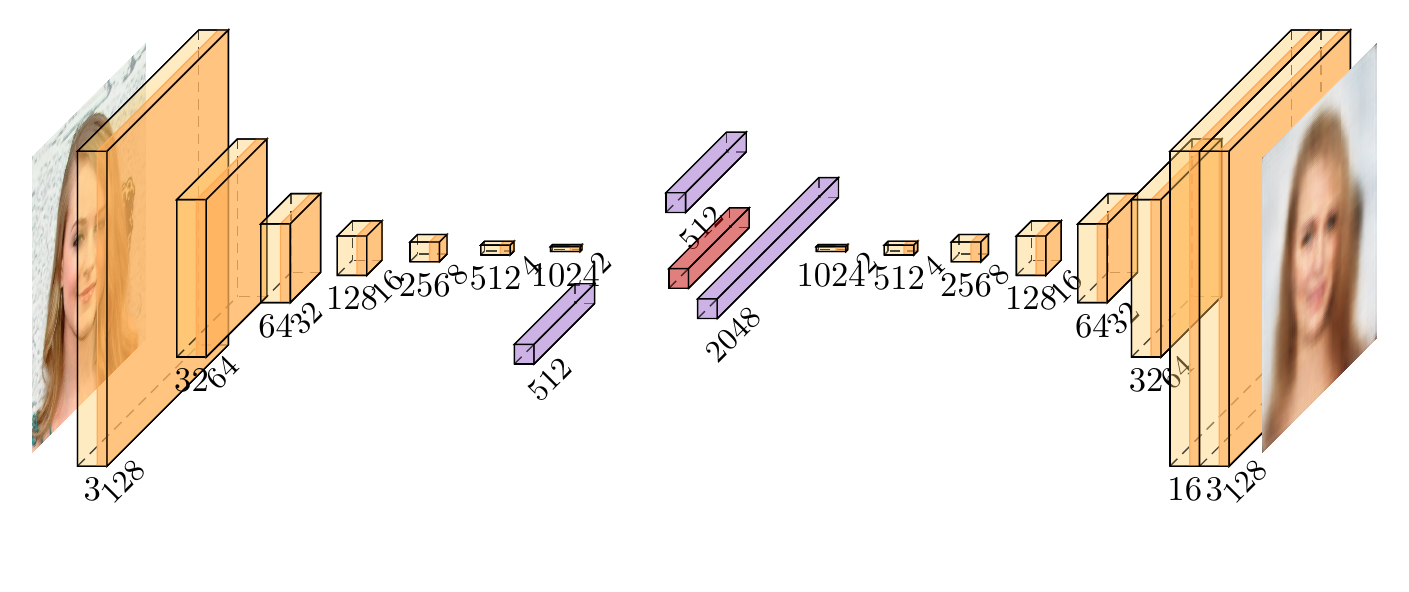}
    \caption{Encoder-Decoder Architecture used for this work}\label{fig:vae}
\end{figure}
Once all the images have been encoded in $\mathbb{R}^{512}$ it is possible to use the local linear estimator of the gradient studied in this work to derive the gradient of the age with respect to the latent variable, making it possible to produce a new version of the input image that appears either older or younger as done in Figure~\ref{fig:age}. By computing a local estimate of the gradient, we are able to derive a more meaningful change when the age is not perfectly disentangled.
\begin{figure}[hbt!]
    \centering
    \begin{tikzpicture}
        \node[inner sep=0pt] (encoded) at (0,0)
            {\includegraphics[width=.15\textwidth]{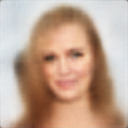}};
        \node[inner sep=0pt] (aged) at (6,0)
            {\includegraphics[width=.15\textwidth]{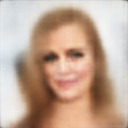}};
        \draw[->,thick] (encoded.east) -- (aged.west)
            node[midway,fill=white] {$z + 0.1 \times \nabla m(z)$};
    \end{tikzpicture}
    \caption{Extracting the direction of interest for aging.}\label{fig:age}
\end{figure}
Note that the quality of the image reconstruction and generation is here solely limited by the choice of the encoding and decoding model and is not related to the methods introduced in this paper, significant advances in the quality of the decoding have been made in the recent years and if a better quality and less blurry decoded output are desired we encourage the reader to replace the decoder with a \texttt{PixelCNN} architecture such as presented in~\cite{salimansPixelCNNImprovingPixelCNN2017}. The quality of the gradient is also significantly impacted by the quality of the annotations as \texttt{CACD200} is an automatically annotated and noisy dataset. 
\end{document}